\def\BibTeX{{\rm B\kern-.05em{\sc i\kern-.025em b}\kern-.08em
    T\kern-.1667em\lower.7ex\hbox{E}\kern-.125emX}}
\newtheorem{proposition}{Proposition}
\newtheorem{corollary}{Corollary}
\newtheorem{lemma}{Lemma}
\definecolor{gray25}{rgb}{0.85, 0.85, 0.85}
\begin{document}
\title{Predicting Large-scale Urban Network Dynamics with Energy-informed Graph Neural Diffusion}
\author{}
\author{Tong Nie, Jian Sun,~\IEEEmembership{Senior Member,~IEEE,} Wei Ma$^*$,~\IEEEmembership{Member,~IEEE}
\thanks{
This research was sponsored by the National Natural Science Foundation of China (524B2164, 52125208) and the Research Grants Council of the Hong Kong Special Administrative Region, China (Project No. PolyU/25209221, PolyU/15206322, and PolyU/15227424).}
\thanks{Tong Nie and Wei Ma are with the Department of Civil and Environmental Engineering, The Hong Kong Polytechnic University, Hong Kong SAR, China. E-mail: tong.nie@connect.polyu.hk, wei.w.ma@polyu.edu.hk.)}
\thanks{Jian Sun is with the Department of Traffic Engineering and Key Laboratory of Road and Traffic Engineering, Ministry of Education, Tongji University. Shanghai, China. 201804. (E-mail: sunjian@tongji.edu.cn.)
}
\thanks{Corresponding author: Wei Ma.}
}

\maketitle

\begin{abstract}
Networked urban systems facilitate the flow of people, resources, and services, and are essential for economic and social interactions. These systems often involve complex processes with unknown governing rules, observed by sensor-based time series. To aid decision-making in industrial and engineering contexts, data-driven predictive models are used to forecast spatiotemporal dynamics of urban systems. Current models such as graph neural networks have shown promise but face a trade-off between efficacy and efficiency due to computational demands. Hence, their applications in large-scale networks still require further efforts. 
This paper addresses this trade-off challenge by drawing inspiration from physical laws to inform essential model designs that align with fundamental principles and avoid architectural redundancy.
By understanding both micro- and macro-processes, we present a principled interpretable neural diffusion scheme based on Transformer-like structures whose attention layers are induced by low-dimensional embeddings.
The proposed scalable spatiotemporal Transformer (ScaleSTF), with linear complexity, is validated on large-scale urban systems including traffic flow, solar power, and smart meters, showing state-of-the-art performance and remarkable scalability. 
Our results constitute a fresh perspective on the dynamics prediction in large-scale urban networks. 
\end{abstract}

\begin{IEEEkeywords}
Networked Urban Systems, Dynamics Prediction, Graph Neural Diffusion, Transformer, Scalability
\end{IEEEkeywords}

\section{Introduction}

\IEEEPARstart{U}{rban} networks comprise interlinked centers within cities that promote the movement of individuals, resources, and services, thereby fostering economic and social exchanges.
These networks, including transportation systems, production infrastructures, and energy hubs, are governed by complex processes with unknown physical principles.
The direct measure of such unknown dynamics is the sensor-based time series. To help decision-makers obtain accurate and prompt decisions in industrial and engineering applications, data-driven predictive models are established to correlate the observed series and forecast the spatiotemporal evolution of the system.
One key ingredient in modeling the interactive process is the relation among instances. By abstracting instance interactions as graphs, significant progress has been made in developing deep geometric neural architectures to predict dynamics, such as graph neural networks (GNNs) \cite{prasse2022predicting} and Transformers \cite{wu2023difformer}. These models have demonstrated remarkable performances in predicting static graphs \cite{chamberlain2021grand,thorpe2022grand++} and spatiotemporal graphs \cite{li2017diffusion,wu2019graph,bai2020adaptive,cini2022scalable} in urban systems.

The philosophy of these models is to learn meaningful node and graph representations (a.k.a. embeddings) that can effectively leverage collective information from other instances to better predict the dynamics of each individual and uncover latent structures, especially under limited computation budget \cite{cini2022scalable,wu2023difformer,liu2024largest}.
However, a worrying trend that has emerged in current architectural designs is their increasing complexity and difficulty in understanding the mechanism. Due to the lack of physical priors about the data generation process, the stacking of complex modules becomes common practice to meet the requirements of high expressiveness \cite{liu2023we,liu2024largest}. 
These ``black-box'' modules are associated with increased computational burdens and data-hungry architectures, making them difficult to deal with high-dimensional urban networks. Therefore, the \textbf{dilemma arises that current models have to achieve a compromise between effectiveness and efficiency.}

Particularly focused on the urban time series forecasting perspective, the two prevailing lines of research each tend to favor one side of this trade-off. 
First, graph-based methods \cite{shao2022spatial,nie2024channel} reduce the complexity of addressing spatial heterogeneity by introducing learnable node embeddings. The learned inductive bias can alleviate the difficulty in designing complicated models. 
Second, Transformer-based models \cite{STAEformer,jiang2023pdformer} further pursue extreme high performance. The global attention enables them to exploit unobserved interactions and long-range dependencies, surpassing its counterparts, such as GNNs, with high expressivity.
However, the former has limited capacity to learn complex networks and is restricted to small- and medium-scale datasets. The latter uses computationally expensive techniques with potential redundancy that impair their scalability to process large-scale networks under constrained resources. 
More importantly, \textbf{there is a lack of a principled perspective to derive the modeling process and a unified way to inherit the merits of both paradigms.}


In summary, the lack of known driving mechanisms of existing models often necessitates the reliance on stacked black-box modules, which results in high computational overhead to achieve desired accuracy. 
To break this trade-off, we draw inspiration from physical laws and interpret the spatiotemporal process with a general network dynamical model. This allows us to design specific modules that align with fundamental principles that describe the regularity of network dynamics, thereby ensuring accuracy while avoiding redundant design.
{Therefore, we present ScaleSTF, a unified, physically grounded framework that combines an energy‐regularized diffusion interpretation with a Transformer‐style architecture to deliver both competitive spatiotemporal prediction accuracy and linear‐complexity scalability on large urban networks.
}
{As shown in Fig. \ref{fig:overview},} from a micro viewpoint, we elucidate the dynamics with an energy-reduced neural diffusion scheme; from a macro perspective, we connect it with a graph signal denoising process. Our theoretical analysis indicates that for an associated energy measure, there is an equivalence between the discrete diffusion scheme and ultimate states of the graph denoising process. This provides a principled
perspective to inform model designs and we then present an interpretable and expressive neural diffusion scheme based on the Transformer-like structure.
To simultaneously preserve efficiency for large urban networks, we encourage scalability by introducing a low-dimensional embedding method and integrating it into the attentive aggregation of dominant node representations.

In general, the proposed model has linear complexity with respect to the dimension of the network, making it scalable for large urban systems. Empirical evaluations are performed on real-world and synthetic urban datasets, including traffic flow, solar power, and smart meters.
The results show that our model preserves the expressivity of advanced Transformers to achieve state-of-the-art (SOTA) performances and also delivers high computational efficiency.
Our contributions are threefold:
\begin{enumerate}
    \item We theoretically interpret the urban dynamics prediction model by linking the energy-regularized neural diffusion process with a global graph signal denoising problem;
    \item A scalable Transformer-like model called ScaleSTF is developed for large graphs with a low-rank embedding and a modulated node attention in linear complexity;
    \item Large-scale experiments with thousands of nodes show the remarkable scalability and SOTA performance. 
\end{enumerate}

The remainder of this paper is structured as follows. Section \ref{sec:related} briefly reviews related literature. 
Section \ref{sec:motivation} establishes a theoretical analysis of urban network dynamics and presents our motivation. Section \ref{sec:model} elaborates on the proposed model.
Section \ref{sec:exp} performs empirical evaluations using both real-world and simulated urban data.
Section \ref{sec:conclusion} concludes this work and provides future directions. 

\section{Related Work}\label{sec:related}
This section briefly reviews related studies. First, as our work naturally connects with time series (spatiotemporal) forecasting models, we introduce recent advances on data-driven forecasting. Then, we discuss several pioneering works on scalable methods for large networks and show how their methods differ from the present study. Last, graph models based on continuous dynamics are revisited as foundations.
\subsection{Data-driven Time Series Forecasting}
The dynamics of urban networks is usually sensed as time series. The measured time series can be correlated by their physical properties, causing a graph structure. Using this structure and observed data, STGNNs and Transformers are widely developed to predict their short-term variations and long-term periodic behaviors \cite{yu2017spatio, li2018diffusion, wu2019graph, bai2020adaptive, wu2021autoformer,zhou2022fedformer,nie2024imputeformer}.
These data-driven models have shown improved performance compared to traditional statistical methods and been widely applied in various domains of urban studies, such as traffic, energy, meteorology, and environment.
However, deep time series models struggle to comprehend the underlying physical regularities of urban dynamics, forcing practitioners to stack numerous ``black-box'' modules to construct complex architectures. This makes them neither intuitive nor efficient for large-scale applications.

\subsection{Scalable Methods for Large Urban Networks}
Real-world urban networks such as transportation networks are massive in scale. The high dimensionality of variables to be predicted hinders the application of computationally extensive methods. To this end, the focus has recently shifted to developing scalable models for large networks. 
In particular, Cini et al. \cite{cini2022scalable} proposed a scalable graph predictor based on the random walk diffusion operation and the echo state network to encode spatiotemporal representations prior to training. 
Liu et al. \cite{liu2023we} developed two alternative techniques, including a preprocessing-based ego graph and a global sensor embedding to model spatial correlations. The processed spatial features are further fed to temporal models such as RNNs and WaveNets. A graph sampling strategy is required to improve training performance. However, both approaches rely on complex temporal encoders and precomputed graph features. 

\subsection{Graph Neural Diffusion}
The message passing mechanism is a core technique in graph neural networks (GNNs), where information is iteratively aggregated to central nodes of the direct neighborhood. This process is associated with a physical process called heat diffusion \cite{bochner1949diffusion}. New models have been established on this continuous formulation to address some limitations of classic GNNs. For example, GRAND \cite{chamberlain2021grand} generalized the graph attention based on an anisotropic diffusion. GRAND++ \cite{thorpe2022grand++} further extended the model with an additional source term.
DIFFormer \cite{wu2023difformer} developed a Transformer-like diffusion scheme with a global attention model. These works studied static graph-based tasks such as node classification, which differ from the spatiotemporal prediction problem in this paper.

{
\subsection{Spatiotemporal Transformers}
Spatiotemporal Transformers have emerged as a powerful framework for modeling complex spatial-temporal dependencies inherent in urban systems. Their self-attention mechanisms enable the capture of long-range temporal patterns and dynamic spatial relationships, which are critical for urban computing applications \cite{wen2022transformers}. 
For example, spatiotemporal Transformer networks (STTNs) \cite{xu2020spatial} capture dynamic spatial dependencies and long-range temporal patterns, leading to enhanced accuracy in short-term traffic prediction. 
Additionally, lightweight architectures such as ST-TIS \cite{li2022lightweight} reduce computational cost while maintaining high accuracy through information fusion and region sampling. 
Beyond traffic data, they have been utilized in broader urban computing contexts, such as urban mobility modeling \cite{wang2024cola,yuan2024foundation}, urban visual scene understanding \cite{zhou2022mtanet,zhou2023embedded,zhou2023mmsmcnet,zhou2024mdnet,ma2025novel}, as well as environmental and energy monitoring \cite{ma2023stnet,dai2025citytft}. 
These developments underscore the versatility and effectiveness of spatiotemporal Transformers in addressing diverse challenges within urban networks.
}

{
\subsection{Summary of Challenges}
Existing spatiotemporal prediction studies fall into three main streams. Data-driven models based on STGNNs and Transformers deliver strong predictive performance but rely on deep stacked “black‑box” modules that obscure physical interpretability and incur high computational cost. Scalable graph‑based approaches such as random-walk diffusion with echo state networks and preprocessing‑based sensor embeddings can enhance efficiency on medium-scale networks but depend on complex temporal encoders, precomputed features, and sampling strategies that limit generalizability. Continuous diffusion‑inspired architectures like GRAND and DIFFormer establish principled links to physical processes but focus mainly on static or small graphs, hindering their applicability to large urban systems. Collectively, these lines of work expose a persistent trade‑off between effectiveness and efficiency, driven by the absence of a unified, physics‑informed framework. Motivated by this gap, our study formulates an energy‑regularized neural diffusion perspective, which bridges micro- and macro-scale dynamics to yield an interpretable, scalable Transformer‑like model for large‑scale prediction.
}

\begin{figure*}[!htbp]
  \centering
  \includegraphics[width=0.99\textwidth]{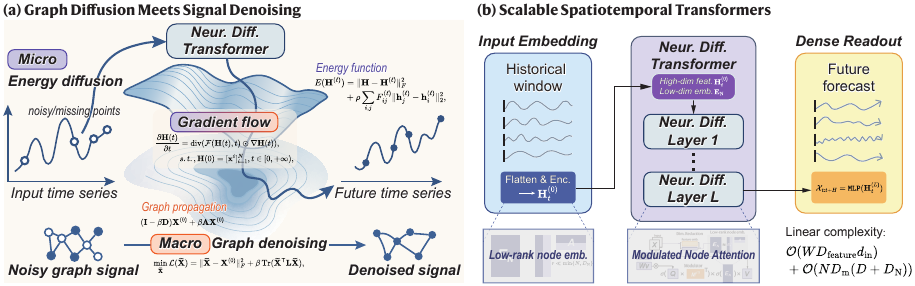}
  \caption{{Overview of the proposed theoretical framework and the model architecture. (a) Our theoretical analysis links the energy diffusion scheme with the graph signal denoising process. (b) This analysis inspires the design of a scalable spatiotemporal Transformer model with linear complexity.}}
  \label{fig:overview}
\end{figure*}

\section{Motivation and Theoretical Framework}\label{sec:motivation}
Before introducing the model, we articulate our motivation by establishing a theoretical framework and observing evidence from data. These insights shed light on model design and guide us to propose a novel class of architecture to achieve both efficiency and effectiveness for large urban networks.

\subsection{Notation and Preliminary}
Consider a \textit{spatiotemporal graph} (STG) with fixed topology, the \textit{node} set $\mathcal{V}$ represents the union of all sensors with $|\mathcal{V}|=N$, and the adjacency matrix $\mathbf{A}\in\mathbb{R}^{N\times N}$ {with its entry being $a_{i,j}$} prescribes the connectivity between nodes. Each node observes a time-varying graph signal and we denote the observation at time step $t$ of node $i$ as $\mathbf{x}_t^i\in\mathbb{R}^{d_{\text{in}}}$. 
Without ambiguity, we also denote $x_i(t)$ the scaler nodal state of the node $i$ and time $t$, {i.e., $\text{If }d_{\rm in}=1,\;x_i(t)\equiv x_t^i,\text{otherwise use }\mathbf{x}_t^i\in\mathbb{R}^{d_{\rm in}}.$}
Then we use the matrix $\mathbf{X}_t\in\mathbb{R}^{N\times d_{\text{in}}}$ to indicate the graph state at a single step and the tensor $\mathcal{X}_{t:t+T}\in\mathbb{R}^{N\times T\times d_{\text{in}}}$ to represent all the signals within the interval $\{t,t+1,\dots,t+T\}$. Given the STG within a historical window $W$ as $\mathcal{G}_{t-W:t}=(\mathcal{X}_{t-W:t};\mathbf{A})$, the prediction problem can be formulated as learning a multivariate forecaster $\mathcal{F}$:
\begin{equation}
{
    \widehat{\mathcal{X}}_{t+1:t+H}=\mathcal{F}(\mathcal{G}_{t-W:t}),
    }
\end{equation}
where $H$ is the horizon and {$\widehat{\mathcal{X}}_{t+1:t+H}\in\mathbb{R}^{N\times H\times d_{\text{out}}}$} is the prediction. 
The forecaster $\mathcal{F}$ is learned through minimization of the supervision loss over all nodes on the graph:
\begin{equation}
    {
    \mathcal{L}_{t+1:t+H}=\sum_{h=t+1}^{t+H}\sum_{i=1}^N\Vert\widehat{\mathbf{x}}_h^i-\mathbf{x}_h^i \Vert_1.
    }
\end{equation}

\subsection{A General Class of Network Dynamical Model}

An urban sensor network can be characterized as a complex networked system consisting of two interdependent parts: the network topology and the network dynamics. The former includes links and interconnected nodes, and the latter is specified by some governing equations \cite{prasse2022predicting}.
Dynamics on networks describe a wide range of urban phenomena, such as the propagation of traffic congestion, interactions on electrical grids, and activities on production networks. To model these networks, we consider a general class of dynamics model \cite{prasse2022predicting}:
\begin{equation}\label{eq:dynamics}
{
    \frac{\mathrm{d}x_i(t)}{\mathrm{d}t}=\mathcal{E}_i(x_i(t))+\sum_{j=1}^Na_{i,j}\mathcal{I}(x_i(t),x_j(t)),\forall i=\{1,\dots,N\},
    }
\end{equation}
where $\mathcal{E}_i(\cdot)$ prescribes the self-dynamics of node $i$ and can be heterogeneous across the network, and $\mathcal{I}(\cdot)$ is the function that describes the interaction between node $i$ and its neighbors.

In urban networks, $N$ can be very large, making precise identification and prediction of dynamics difficult.
Fortunately, recent studies reveal that the dynamics of large networks reside in a low-dimensional subspace \cite{thibeault2024low,wu2024predicting}. Some dimensional reduction techniques can be used to approximate the full dynamics using reduced-order structures, e.g., the proper orthogonal decomposition (POD) {for scalar function}:
\begin{equation}\label{eq:POD}
    \mathbf{x}(t)\approx\sum_{k=1}^r\alpha_k(t)\phi_k,
\end{equation}
where $\mathbf{x}(t)=[x_i(t)]_{i=1}^N$ is the stack of nodal states, $\phi_1,\dots,\phi_r$ are orthonormal vectors, and $\alpha_k(t)$ is a time-dependent factor. We will detail the method to obtain the POD in Section \ref{sec:low-rank-emb}.

While observations are from the real world, authentic functions $\mathcal{E}_i(\cdot)$ and $\mathcal{I}(\cdot)$ are usually unknown. Additionally, observed graph structures can be incomplete or noisy due to error-prone data collection.
Thus, we adopt surrogate neural models for Eq. \eqref{eq:dynamics} in latent spaces. Graph neural networks (GNNs) and Transformers are widely adopted in this context. However, they face limitations in interpretability and efficiency. To inspire a new class of architecture for addressing these issues, we theoretically analyze the neural message passing mechanism and endeavor to unlock the ``black box''.

\subsection{Graph Neural Diffusion with Energy Regularization}\label{sec:energy_diffusion}
Generally, we have found two principled ways to analyze the network dynamics with interpretability. First, we study the \textbf{ microscopic} behavior of the system.
We start by revisiting the diffusion equations from first principles. The diffusion equations describe how certain quantities of interest, such as mass or heat, disperse spatially as a function of time, according to the law of Fick and the law of mass conservation \cite{bochner1949diffusion}.
There is a natural analogy between the message passing of node in the graph and the (heat) diffusion on the Riemannian manifold.

Formally, the quantity spreads out from the locations with high concentrations to others with mass continuity. Given node representations processed by neural models as the physical quantity and update of node representations $\mathbf{h}_i(t)\in\mathbb{R}^d$ per layer as flux through time, the diffusion process is described by a partial differential equation with initial conditions \cite{rosenberg1997laplacian}:
\begin{equation}\label{eq:diffusion}
\begin{aligned}
    \frac{\partial \mathbf{H}(t)}{\partial t}&=\operatorname{div}(\mathcal{F}(\mathbf{H}(t),t)\odot\nabla\mathbf{H}(t)), \\
    &s.t., \mathbf{H}(0)=[\mathbf{x}^i]_{i=1}^N,t\in[0,+\infty),
\end{aligned}
\end{equation}
where $\mathbf{H}(t)=[\mathbf{h}_i(t)]_{i=1}^N$, $\operatorname{div}$ is the divergence operator that computes the total mass changes at certain locations, $\nabla$ is the gradient operator measures the difference over space, and $\mathcal{F}(\mathbf{H}(t),t):\mathbb{R}^{N\times d}\times[0,+\infty)\mapsto\mathbb{R}^{N\times N}$ denotes the \textit{diffusivity} function that determines the diffusion intensity between nodes at time $t$. As a discrete realization, Eq. \eqref{eq:diffusion} can be written as an explicit form using the differential operators on graphs:
\begin{equation}\label{eq:graph-diffusion}
    \frac{\partial \mathbf{h}_i(t)}{\partial t}=\sum_{j\in\mathcal{N}(i)}F_{ij}(\mathbf{H}(t),t)(\mathbf{h}_j(t)-\mathbf{h}_i(t)), 
\end{equation}
where $\{F_{ij}\}_{i,j}$ is the diffusivity matrix associated with $\mathcal{F}$ and $\mathcal{N}(i)$ is connected neighbors of node $i$. Eq. \eqref{eq:graph-diffusion} characterizes the graph neural diffusion of instance evolution in continuous dynamics. It can be solved using numerical methods, such as the explicit Euler scheme with difference step size $\delta$:
\begin{equation}\label{eq:euler-scheme}
{
\begin{aligned}
    \mathbf{h}_i^{(\ell+1)}&=\mathbf{h}_i^{(\ell)}+{\delta}\sum_{j\in\mathcal{N}(i)}F_{ij}^{(\ell)}(\mathbf{h}_j^{(\ell)}-\mathbf{h}_i^{(\ell)}), \\
    &=\mathbf{h}_i^{(\ell)}-{\delta}\left(\operatorname{diag}(\sum_j F_{ij}^{(\ell)})-\mathbf{F}^{(\ell)}\right)\mathbf{h}_i^{(\ell)},\\
    &=(\mathbf{I}-{\delta}\operatorname{diag}(\sum_j F_{ij}^{(\ell)}))\mathbf{h}_i^{(\ell)}+\delta\mathbf{F}\mathbf{h}_i^{(\ell)},
\end{aligned}
}
\end{equation}
which constructs the layer-wise updating rule of the graph neural diffusion model \cite{chamberlain2021grand}. {The first term is a self-updating source with a residual connection with the last state, and the second term aggregates the information from all neighborhoods on the graph.}
Eq. \eqref{eq:euler-scheme} is a discrete neural network model of the network dynamics in Eq. \eqref{eq:dynamics}.
As a generalized case, we consider the underlying graph is densely connected, i.e., $\mathcal{N}(i)=\mathcal{V}$, and the diffusivity is a latent variable condition on the layerwise nodal representation to be inferred.
In particular, the microbehavior in this physical system is controlled by a global energy that imposes some constraints on the direction of the evolution towards an equilibrium state \cite{di2022understanding,wu2023difformer}.
The regularized Dirichlet energy is used to quantify the total variability of quantities in the {graph-structured} system:
\begin{equation}\label{eq:energy}
{
    E(\mathbf{H})=\Vert\mathbf{H}-\mathbf{H}^{(\ell)}\Vert_F^2+\rho\sum_{i,j}F_{ij}^{(\ell)}\Vert\mathbf{h}_j-\mathbf{h}_i\Vert_2^2,
    }
\end{equation}
where $\mathbf{H}^{(\ell)}=[\mathbf{h}_i^{(\ell)}]_{i=1}^N$.
The first term regularizes the consistency between layerwise embedding {$\mathbf{H}^{(\ell)}$ and system variable $\mathbf{H}$ before propagation}, and the second term controls the global smoothness (total variation) of {node states (variables)} on the graph.
The following proposition shows how the energy controls the microbehavior of per layer updates.
\begin{proposition}
Gradient flows to reduce the energy defined in Eq. \eqref{eq:energy} lead to the iterative diffusion propagation scheme. 
\end{proposition}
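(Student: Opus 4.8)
The plan is to treat each diffusion layer as performing one step of a gradient flow that minimizes the layer-specific energy $E(\mathbf{H})$ in Eq.~\eqref{eq:energy} (with $\mathbf{H}^{(\ell)}$ frozen), and to show that this step, initialized at the current state $\mathbf{H}^{(\ell)}$, reproduces the explicit Euler update of Eq.~\eqref{eq:euler-scheme} once the constants are absorbed into the step size $\delta$. So the core of the argument is a short gradient computation followed by an identification of coefficients.

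First I would differentiate $E$ row-wise with respect to $\mathbf{h}_i$. The proximal term $\|\mathbf{H}-\mathbf{H}^{(\ell)}\|_F^2=\sum_i\|\mathbf{h}_i-\mathbf{h}_i^{(\ell)}\|_2^2$ contributes $2(\mathbf{h}_i-\mathbf{h}_i^{(\ell)})$. For the regularized Dirichlet term $\rho\sum_{k,j}F_{kj}^{(\ell)}\|\mathbf{h}_j-\mathbf{h}_k\|_2^2$, the key bookkeeping is that $\mathbf{h}_i$ enters both as the ``$k=i$'' and the ``$j=i$'' summand, giving $2\rho\sum_k F_{ki}^{(\ell)}(\mathbf{h}_i-\mathbf{h}_k)-2\rho\sum_j F_{ij}^{(\ell)}(\mathbf{h}_j-\mathbf{h}_i)$; under symmetry of the diffusivity $F_{ij}^{(\ell)}=F_{ji}^{(\ell)}$ (the natural assumption for an undirected Dirichlet form) these merge to $-4\rho\sum_j F_{ij}^{(\ell)}(\mathbf{h}_j-\mathbf{h}_i)$. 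Hence $\nabla_{\mathbf{h}_i}E=2(\mathbf{h}_i-\mathbf{h}_i^{(\ell)})-4\rho\sum_j F_{ij}^{(\ell)}(\mathbf{h}_j-\mathbf{h}_i)$, which in matrix form is $2(\mathbf{H}-\mathbf{H}^{(\ell)})+4\rho\,(\operatorname{diag}(\sum_j F_{ij}^{(\ell)})-\mathbf{F}^{(\ell)})\mathbf{H}$, i.e. the diffusivity Laplacian acts exactly as in the second line of Eq.~\eqref{eq:euler-scheme}.

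Then I would run the gradient step $\mathbf{h}_i\leftarrow\mathbf{h}_i-\eta\,\nabla_{\mathbf{h}_i}E$ starting from $\mathbf{h}_i=\mathbf{h}_i^{(\ell)}$. At this initialization the proximal contribution vanishes, leaving $\mathbf{h}_i^{(\ell+1)}=\mathbf{h}_i^{(\ell)}+4\eta\rho\sum_j F_{ij}^{(\ell)}(\mathbf{h}_j^{(\ell)}-\mathbf{h}_i^{(\ell)})$; setting $\delta:=4\eta\rho$ recovers Eq.~\eqref{eq:euler-scheme} line by line, with the residual/source term being precisely the footprint of the proximal term that re-anchors the descent at $\mathbf{H}^{(\ell)}$ at each layer. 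Equivalently, letting $\eta\to 0$ shows the continuous gradient flow $\dot{\mathbf{H}}=-\nabla E(\mathbf{H})$ linearized at $\mathbf{H}^{(\ell)}$ is exactly the graph diffusion PDE Eq.~\eqref{eq:graph-diffusion}, so its explicit Euler discretization is the layerwise rule. I would also remark that $E$ is nonincreasing along this flow by construction, so the scheme is an energy-descent dynamics toward an equilibrium, which is the interpretation the later denoising analysis will exploit.

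The main obstacle here is conceptual rather than computational: one must state carefully what ``gradient flow leads to the scheme'' means, namely a single descent step on the layer-wise energy initialized at $\mathbf{H}^{(\ell)}$ (so the proximal term contributes nothing to the update \emph{direction} at that point yet is responsible for the residual-connection structure), together with the absorption of the numerical constants $2$, $4$, $\rho$, $\eta$ into one learnable $\delta$. A secondary point worth making explicit is the symmetry of $F^{(\ell)}$ used to collapse the two Dirichlet contributions; if asymmetric diffusivities are permitted, the same computation yields the symmetrized operator $\tfrac12(F^{(\ell)}+{F^{(\ell)}}^{\top})$, which is the object that genuinely drives the energy down, and the identification with Eq.~\eqref{eq:euler-scheme} then holds for this symmetrized diffusivity.
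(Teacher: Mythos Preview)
Your proposal is correct and follows essentially the same approach as the paper: a single gradient-descent step on the layer energy, evaluated at $\mathbf{H}=\mathbf{H}^{(\ell)}$ so the proximal term vanishes, then identification of the remaining Dirichlet-Laplacian term with the Euler update of Eq.~\eqref{eq:euler-scheme} after absorbing constants into $\delta$. If anything you are more careful than the paper's argument, since you make the symmetry assumption on $F^{(\ell)}$ explicit (which is why you obtain $4\eta\rho$ rather than the paper's $2\alpha\rho$; the paper tacitly counts each pair once) and you note what happens in the asymmetric case.
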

\begin{proof}
    Considering that the physical system tends to converge at a steady point with energy minimized, we study the gradient flow to minimize {$E(\mathbf{H})$} for all nodes:
    \begin{equation}\label{eq:energy_min}
    {
    \begin{aligned}
        &\mathbf{H}^{(\ell+1)}=\mathbf{H}^{(\ell)}-\alpha\frac{\partial E(\mathbf{H})}{\partial \mathbf{H}}\vert_{\mathbf{H}=\mathbf{H}^{(\ell)}}, \\
        &=\mathbf{H}^{(\ell)}-2\alpha(\mathbf{H}-\mathbf{H}^{(\ell)}) -\alpha\rho\frac{\partial \sum_{i,j}F_{ij}^{(\ell)}\Vert\mathbf{h}_j-\mathbf{h}_i\Vert_2^2}{\partial \mathbf{H}}\vert_{\mathbf{H}=\mathbf{H}^{(\ell)}}, \\
        &=\mathbf{H}^{(\ell)}-2\alpha\rho(\mathbf{D}^{(\ell)}-\mathbf{F}^{(\ell)})\mathbf{H}^{(\ell)}, \\
        &=(\mathbf{I}-\delta'\mathbf{D}^{(\ell)})\mathbf{H}^{(\ell)}+\delta'\mathbf{F}^{(\ell)}\mathbf{H}^{(\ell)},\\
        \end{aligned}
    }
    \end{equation}
    where $\delta'=2\alpha\rho$ and {$\mathbf{D}^{(\ell)}=\operatorname{diag}(\sum_j F_{ij}^{(\ell)})$}. Note that this is the matrix version of Eq. \eqref{eq:euler-scheme} and thus the relationship holds.
\end{proof}

The above analysis indicates that we can use neural networks to model the network dynamics in the latent space and enforce constraints by designing appropriate energy measures. However, in practice, energy is not intuitive for urban networks. In the following, we reveal that the energy reduction can be understood as a solution to a graph denoising problem.

\subsection{Spatiotemporal Graph Signal Denoising}\label{sec:graph-denoising}

From another perspective, we investigate the \textbf{macroscopic} phenomenon of the above process. Let us consider estimating a low-frequency component $\widetilde{\mathbf{X}}$ from the observed noised data $\mathbf{X}^{(0)}$, the graph regularized least square problem is given by:
\begin{equation}\label{eq:graph-RLS}
    \min_{\widetilde{\mathbf{X}}} \mathcal{L}(\widetilde{\mathbf{X}})=\Vert \widetilde{\mathbf{X}}-\mathbf{X}^{(0)}\Vert_F^2 + \beta \operatorname{Tr}(\widetilde{\mathbf{X}}^{\mathsf{T}}\mathbf{L}\widetilde{\mathbf{X}}),
\end{equation}
which formulates the graph signal denoising problem \cite{nt2019revisiting} defined over $N$ nodes, and $\mathbf{L}$ is the {unnormalized} Laplacian. The first penalty guides $\widetilde{\mathbf{X}}$ to be close to $\mathbf{X}^{(0)}$ and the second term is the Laplacian regularization that encourages a smooth signal.
To recover the smooth signal $\widetilde{\mathbf{X}}$, we derive the optimal solution to Eq. \eqref{eq:graph-RLS} and develop the following proposition.

\begin{proposition}
The final state of the diffusion process in Eq. \eqref{eq:energy_min} is a denoised smooth graph signal.
\end{proposition}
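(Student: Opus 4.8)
The plan is to close the loop between the two variational problems: I would show that the steady/final state of the layer-wise diffusion iteration in Eq.~\eqref{eq:energy_min} coincides with the minimizer of the graph denoising objective in Eq.~\eqref{eq:graph-RLS}, and then verify that this minimizer is a low-pass — hence smooth, denoised — filtering of the input $\mathbf{X}^{(0)}$. First I would take the first-order stationarity condition of Eq.~\eqref{eq:graph-RLS}: from $\partial\mathcal{L}/\partial\widetilde{\mathbf{X}}=2(\widetilde{\mathbf{X}}-\mathbf{X}^{(0)})+2\beta\mathbf{L}\widetilde{\mathbf{X}}=\mathbf{0}$ one gets the normal equation $(\mathbf{I}+\beta\mathbf{L})\widetilde{\mathbf{X}}^{\star}=\mathbf{X}^{(0)}$; since $\mathbf{I}+\beta\mathbf{L}\succ\mathbf{0}$ the problem is strictly convex, so $\widetilde{\mathbf{X}}^{\star}=(\mathbf{I}+\beta\mathbf{L})^{-1}\mathbf{X}^{(0)}$ is the unique denoised signal.

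Next I would match this with the diffusion side. Using $\mathbf{D}^{(\ell)}=\operatorname{diag}(\sum_j F_{ij}^{(\ell)})$ and a symmetric diffusivity (as in the energy view), the Dirichlet term in Eq.~\eqref{eq:energy} satisfies $\sum_{i,j}F_{ij}^{(\ell)}\Vert\mathbf{h}_j-\mathbf{h}_i\Vert_2^2=2\operatorname{Tr}\big(\mathbf{H}^{\mathsf{T}}\mathbf{L}^{(\ell)}\mathbf{H}\big)$ with $\mathbf{L}^{(\ell)}:=\mathbf{D}^{(\ell)}-\mathbf{F}^{(\ell)}$ the (learned) graph Laplacian induced by the diffusivity. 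Thus Eq.~\eqref{eq:energy} is the same quadratic form as Eq.~\eqref{eq:graph-RLS} under the identification $\mathbf{H}^{(\ell)}\!\leftrightarrow\!\mathbf{X}^{(0)}$ (the reference/anchor state), $\rho\!\leftrightarrow\!\beta$ (up to the constant absorbed into the step size), and $\mathbf{L}^{(\ell)}\!\leftrightarrow\!\mathbf{L}$. By Proposition~1 each diffusion layer is one gradient-descent step on this quadratic energy; the ``self-updating source with a residual connection'' term $(\mathbf{I}-\delta'\mathbf{D}^{(\ell)})\mathbf{h}_i^{(\ell)}$ in Eq.~\eqref{eq:euler-scheme} is exactly the contribution of the fidelity gradient anchored at the input $\mathbf{H}^{(0)}=\mathbf{X}^{(0)}$. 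Iterating is therefore damped gradient descent on a strictly convex quadratic, so the final state $\mathbf{H}^{\star}$ obeys the same normal equation $(\mathbf{I}+\beta\mathbf{L})\mathbf{H}^{\star}=\mathbf{X}^{(0)}$, i.e.\ $\mathbf{H}^{\star}=\widetilde{\mathbf{X}}^{\star}$. To certify smoothness I would diagonalize $\mathbf{L}=\mathbf{U}\boldsymbol{\Lambda}\mathbf{U}^{\mathsf{T}}$ with $0=\lambda_1\le\cdots\le\lambda_N$ and write $\widetilde{\mathbf{X}}^{\star}=\mathbf{U}\,\mathrm{diag}\!\big(1/(1+\beta\lambda_k)\big)\mathbf{U}^{\mathsf{T}}\mathbf{X}^{(0)}$; the spectral response $g(\lambda)=1/(1+\beta\lambda)$ satisfies $g(0)=1$ and is strictly decreasing, so it is a low-pass filter that attenuates the high-frequency (large-$\lambda$) components of $\mathbf{X}^{(0)}$ — precisely the action of graph denoising.

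The main obstacle, and the step that needs care, is reconciling the \emph{literal} Euler update in Eqs.~\eqref{eq:euler-scheme}/\eqref{eq:energy_min} with this fixed-point characterization: as written, the fidelity anchor is reset to the \emph{current} iterate $\mathbf{H}^{(\ell)}$, so $2(\mathbf{H}-\mathbf{H}^{(\ell)})$ vanishes at $\mathbf{H}=\mathbf{H}^{(\ell)}$ and the update collapses to pure diffusion $\mathbf{H}^{(\ell+1)}=(\mathbf{I}-\delta'\mathbf{L}^{(\ell)})\mathbf{H}^{(\ell)}$, whose fixed set is $\ker\mathbf{L}^{(\ell)}$ (a constant signal, i.e.\ over-smoothing) rather than a useful denoised state. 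Resolving this requires reading the retained residual/source term as a GRAND++-style re-injection of the original input, so that the fidelity stays anchored at $\mathbf{H}^{(0)}=\mathbf{X}^{(0)}$ throughout; only then does the iteration admit the nontrivial fixed point $(\mathbf{I}+\beta\mathbf{L})^{-1}\mathbf{X}^{(0)}$. I would also verify convergence by choosing $\delta'$ so that $\mathbf{I}-\delta'(\mathbf{I}+\beta\mathbf{L})$ is a contraction (e.g.\ $0<\delta'<2/(1+\beta\lambda_N)$), and, because $\mathbf{F}^{(\ell)}$ is in general state-dependent, justify a frozen-coefficient/quasi-static reading so that the limiting Laplacian $\mathbf{L}$ is well defined; under that reading the equivalence is exact, and otherwise it holds to first order in $\delta'$ since the scheme is a consistent discretization of the heat flow $e^{-t\mathbf{L}}$, itself a low-pass filter.
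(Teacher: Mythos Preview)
Your derivation of the normal equation $\widetilde{\mathbf{X}}^{\star}=(\mathbf{I}+\beta\mathbf{L})^{-1}\mathbf{X}^{(0)}$ matches the paper's first step, but from there the two routes diverge. The paper does \emph{not} argue that the diffusion iteration converges to this exact minimizer; instead it Taylor-expands $(\mathbf{I}+\beta\mathbf{L})^{-1}\approx\mathbf{I}-\beta\mathbf{L}=(\mathbf{I}-\beta\mathbf{D})+\beta\mathbf{A}$ for small $\beta$, observes that this one-shot linear filter has precisely the ``residual plus aggregation'' form of the Euler update in Eq.~\eqref{eq:energy_min}, and accounts for depth by factoring the static adjacency as a composition $\mathbf{A}=\mathbf{F}^{(0)}\circ\cdots\circ\mathbf{F}^{(L)}$ of layer-specific diffusivities. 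The link established is therefore \emph{approximate and structural} --- a finite stack of diffusion layers reproduces the first-order denoiser --- rather than a fixed-point identification.

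Your convergence argument is more ambitious and adds things the paper omits (the spectral low-pass reading $g(\lambda)=1/(1+\beta\lambda)$, an explicit contraction step size), but, as you yourself flag, the literal iteration in Eq.~\eqref{eq:energy_min} re-anchors the fidelity term at the current iterate, so without the GRAND++-style source reinjection you propose, the limit is $\ker\mathbf{L}$ (over-smoothing), not $(\mathbf{I}+\beta\mathbf{L})^{-1}\mathbf{X}^{(0)}$. The paper sidesteps this entirely by never taking $L\to\infty$: its Taylor shortcut only needs the first-order structural match, at the price of an $O(\beta^2)$ error and a somewhat informal composition step. Your route would give a sharper statement once the anchoring assumption is made explicit; the paper's buys brevity and avoids the over-smoothing objection altogether.
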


\begin{proof}
Let $\frac{\partial \mathcal{L}(\widetilde{\mathbf{X}})}{\partial \widetilde{\mathbf{X}}}=0$, we have:
    \begin{equation}\label{eq:GRLS_solution}
    \begin{aligned}
        &2\beta\mathbf{L}\widetilde{\mathbf{X}}+2(\widetilde{\mathbf{X}}-\mathbf{X}^{(0)})=0, \\
        &\Rightarrow \widetilde{\mathbf{X}}=(\mathbf{I}+\beta\mathbf{L})^{-1}\mathbf{X}^{(0)}.
    \end{aligned}
    \end{equation}
Obtaining the inverse matrix of a large graph can incur high complexity. Therefore, we consider using the {first‐order} Taylor expansion to approximate Eq. \eqref{eq:GRLS_solution}. {Given a small enough $\beta$}:
\begin{equation}
    \begin{aligned}
        \widetilde{\mathbf{X}}&=(\mathbf{I}+\beta\mathbf{L})^{-1}\mathbf{X}^{(0)}\approx(\mathbf{I}-\beta\mathbf{L})\mathbf{X}^{(0)}, \\
        &= (\mathbf{I}-\beta(\mathbf{D}-\mathbf{A}))\mathbf{X}^{(0)}= (\mathbf{I}-\beta\mathbf{D})\mathbf{X}^{(0)}+\beta\mathbf{A}\mathbf{X}^{(0)}.
    \end{aligned}
\end{equation}
If we process the static adjacency matrix as the composition of layer-specific diffusivity matrices $\mathbf{A}=\mathbf{F}^{(0)}\circ\cdots\circ\mathbf{F}^{(L)}$, 
this is the result of layer-wise message propagation with residual connection of the initial nodal state described in Eq. \eqref{eq:energy_min}.
\end{proof}

Putting the two schools of viewpoint together, we can have a unified view of the physical process used to predict the network dynamics. We develop the following corollary.

\begin{corollary}
    The message-passing based network dynamics prediction model follows a per-layer energy diffusion scheme to iteratively denoise the observed time series, achieving prediction using the recovered signal in latent spaces.
\end{corollary}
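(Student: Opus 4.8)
The plan is to obtain the corollary as a direct synthesis of Propositions~1 and~2, so the proof is mostly a matter of chaining the two equivalences and then making the ``prediction'' clause explicit. First I would recall that Proposition~1 shows the gradient-descent step on the regularized Dirichlet energy $E(\mathbf{H})$ in Eq.~\eqref{eq:energy} reproduces, line for line, the explicit-Euler diffusion update in Eq.~\eqref{eq:euler-scheme}; hence a single message-passing layer $\mathbf{h}_i^{(\ell)}\mapsto\mathbf{h}_i^{(\ell+1)}$ is exactly one energy-reducing diffusion step, with $E(\mathbf{H}^{(\ell+1)})\le E(\mathbf{H}^{(\ell)})$ for a sufficiently small step size $\delta'$ (the standard descent property of the quadratic-plus-Dirichlet functional, whose Dirichlet part is a nonnegative weighted-Laplacian quadratic form once $\mathbf{F}^{(\ell)}$ is frozen at the linearization point). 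I would then point out that $E$ and the denoising objective $\mathcal{L}(\widetilde{\mathbf{X}})$ in Eq.~\eqref{eq:graph-RLS} share the same structure — a squared-error fidelity term plus a Laplacian-type smoothness term — with the diffusivity matrix $\mathbf{F}^{(\ell)}$ playing the role of the weighted adjacency and $\mathbf{D}^{(\ell)}-\mathbf{F}^{(\ell)}$ that of the Laplacian $\mathbf{L}$.

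Next I would invoke Proposition~2: iterating the diffusion update drives the latent state toward the stationary point $\widetilde{\mathbf{X}}=(\mathbf{I}+\beta\mathbf{L})^{-1}\mathbf{X}^{(0)}$, the closed-form minimizer of the graph signal denoising problem, whose first-order surrogate $(\mathbf{I}-\beta\mathbf{L})\mathbf{X}^{(0)}$ is precisely layerwise propagation with residual connection once the static adjacency is factored as the composition $\mathbf{A}=\mathbf{F}^{(0)}\circ\cdots\circ\mathbf{F}^{(L)}$ of the layer-specific diffusivities. Combining the two: the forecaster $\mathcal{F}$ of Eq.~(1) encodes the historical window $\mathcal{G}_{t-W:t}$ into latent states $\mathbf{H}^{(0)}=\mathbf{X}^{(0)}$, applies $L$ energy-reducing diffusion layers — each of which by Proposition~1 is a denoising step and which collectively by Proposition~2 approximate the smooth minimizer of Eq.~\eqref{eq:graph-RLS} — and finally reads out $\widehat{\mathcal{X}}_{t+1:t+H}$ from the recovered latent signal $\mathbf{H}^{(L)}\approx\widetilde{\mathbf{X}}$ through a decoding map. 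This establishes the claimed picture: message passing $=$ per-layer energy diffusion $=$ iterative graph denoising, with the forecast obtained from the denoised latent representation.

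The part requiring care — and the main obstacle — is the rigorous bookkeeping of the approximation underlying Proposition~2, which rests on (i) the first-order Taylor expansion of $(\mathbf{I}+\beta\mathbf{L})^{-1}$, valid only in the small-step regime $\beta\Vert\mathbf{L}\Vert<1$, and (ii) the identification of the fixed static $\mathbf{L}$ with the sequence of distinct, state-dependent operators $\mathbf{D}^{(\ell)}-\mathbf{F}^{(\ell)}$ that actually appear in the trained network. A fully rigorous statement would need to either treat each $\mathbf{F}^{(\ell)}$ as a fixed operator per layer and bound the accumulated truncation error over the $L$ layers, or interpret the composition $\mathbf{F}^{(0)}\circ\cdots\circ\mathbf{F}^{(L)}$ as defining an effective adjacency and argue that the diffusion converges to the corresponding denoised signal up to $O(\beta^2)$ terms. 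I would therefore state the corollary at the same design-motivating level as the two propositions it rests on, emphasizing that it is an \emph{equivalence in the small-step / low-rank-diffusivity regime} rather than an exact identity — which is precisely the regime in which the subsequent ScaleSTF architecture is constructed.
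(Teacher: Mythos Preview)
Your proposal is correct and follows essentially the same approach as the paper: the corollary is simply the synthesis of Propositions~1 and~2, and the paper offers no formal proof beyond the sentence ``Putting the two schools of viewpoint together, we can have a unified view\ldots''. Your write-up is considerably more explicit about the chaining and the small-$\beta$/first-order-Taylor caveats than the paper itself, which treats the corollary as a design-level observation rather than a rigorous statement.
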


This physical prior can guide the design of model architecture. The key is to design a proper diffusivity description that is applicable to large-scale urban networks. There are several natural choice for $\mathbf{F}(\mathbf{H}(t),t)$: (1) If $\mathbf{F}(\mathbf{H}(t),t)=\mathbf{I}$, Eq. \eqref{eq:euler-scheme} reduces to a MLP model with residual connection; (2) If $\mathbf{F}(\mathbf{H}(t),t)=\mathbf{W}$ as a learnable matrix, it becomes a MLP-Mixer model for graphs \cite{he2023generalization}; (3) If $\mathbf{F}(\mathbf{H}(t),t)$ is specified as the observed graph $\mathbf{A}$, it results in a standard GNN; (4) If we allow $\mathbf{F}(\mathbf{H}(t),t)$ as a layer-dependent latent variable and infer it using the node representation $\mathbf{H}^{(\ell)}$, then it generates a (graph) Transformer model.
To balance both expressivity and efficiency for large urban networks, in Section \ref{sec:model} we design a new class of neural architecture based on this inductive bias.

\subsection{Empirical Observations}
\begin{figure}[!htbp]
  \centering
  \includegraphics[width=0.99\columnwidth]{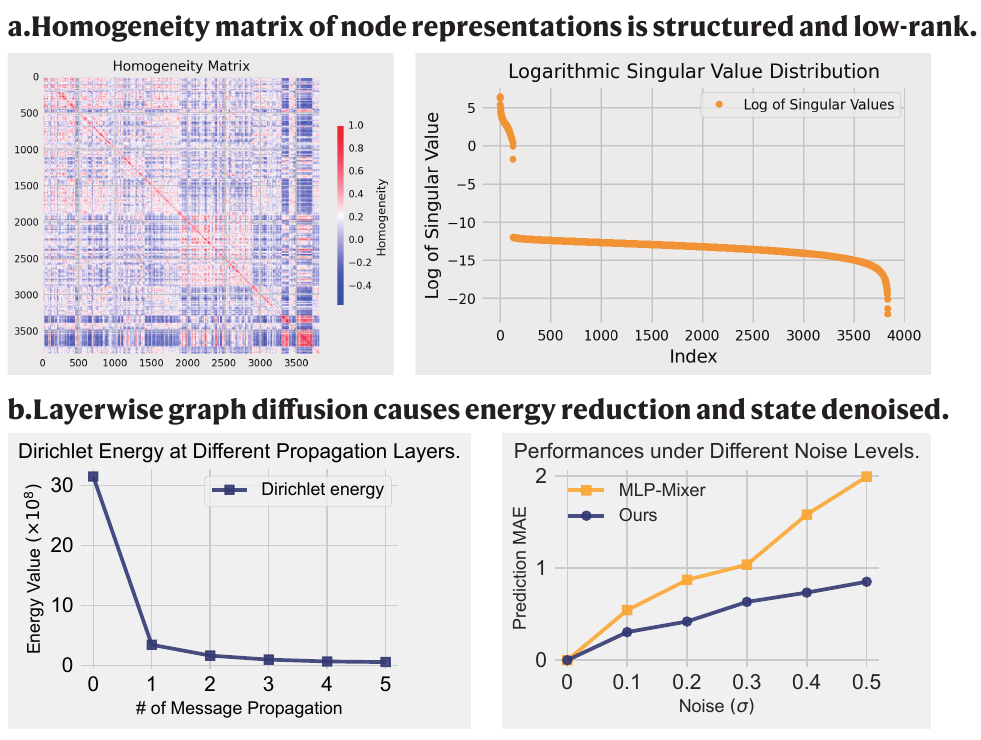}
  \caption{Empirical observations using real-world and synthetic data.}
  \label{fig:motivation}
\end{figure}

To provide justification for the above analysis, we use empirical data examples to show evidence for the graph denoising process and low-dimensional structures in urban networks. First, we study the large-scale traffic network from California (more detailed data descriptions are given in Section \ref{sec:exp}). In Fig. \ref{fig:motivation} (a), we obtain the learned node embedding vectors (see Section \ref{sec:low-rank-emb}) and compute the cosine similarity as the homogeneity score for node representations. This matrix indicates the collective patterns on the graph (network) and shows significant structures. More intuitively, we display its singular values and find a clear truncated distribution, i.e., a low-rank pattern. This means that the dynamics of node representations is controlled by low-dimensional manifolds.

Second, we study a microsystem described in \cite{cini2024taming}, which consists of a locality-aware graph polynomial vector autoregressive model to approximate the behavior of STGNNs:

\begin{equation}\label{eq:gpvar}
    \mathbf{H}_t=\sum_{l=0}^L\sum_{p=1}^P\Psi_{p,l}\mathbf{S}^{l}[\mathbf{X}_{t-p}\|\mathbf{u}_{t-p}], ~\mathbf{X}_t=\mathbf{e}\odot\xi(\mathbf{H}_t) + \eta_t,
\end{equation}
where $\Psi\in\mathbb{R}^{P\times L}$ is the collection of model parameters, $P$ is the total number of time lags, $L$ is the total order of graph propagation, $\eta_t\sim \mathcal{N}(0,\sigma^2\mathbb{I})$ is the Gaussian noise, $\xi$ is the nonlinear function, $\mathbf{H}_t$ is the hidden state at step $t$, $\mathbf{e}\in\mathbb{R}^{N_o}$ simulates the region-specific patterns, $\mathbf{S}^l$ is a graph Laplacian.

We adopt this prototype to evaluate the denoising effect and energy propagation. In Fig. \ref{fig:motivation} (b), we first calculate the Dirichlet energy at different propagation layers. As indicated by the energy trajectory, it decreases rapidly with increasing graph propagation. This echos the layerwise diffusive effects in Section \ref{sec:energy_diffusion}.
Then, we compare the denoising effect of our model with a MLP-Mixer model \cite{chen2023tsmixer}.
The noise level is measured by the standard deviation of the white noise added to the features. By gradually increasing the noise level, both models show higher errors. However, our model is more robust to noise, showing an effective denoising effect.

\section{Proposed Model}\label{sec:model}
{
The analysis in Section \ref{sec:motivation} shows that (1) the evolution of spatiotemporal graph signals can be viewed as an energy-driven diffusion process that iteratively denoises observations towards a smooth equilibrium (Eqs. \eqref{eq:euler-scheme}-\eqref{eq:GRLS_solution}), and (2) the underlying dynamics lie in a low-dimensional manifold amenable to POD (Eq. \eqref{eq:POD}).
Therefore, we directly instantiate these physical principles. 
Concretely, we first compress the raw time series of each node into a reduced order embedding with a POD‑inspired node adapter (Sections \ref{sec:low-rank-emb} and \ref{sec:input_enc}), ensuring that the model captures the intrinsic low‑rank structure of the network. 
Next, we realize the explicit Euler discretization of the diffusion PDE (Eq. \eqref{eq:euler-scheme}) as a multilayer neural diffusion block, enforcing iterative energy minimization to progressively denoise and propagate node representations. 
To maintain scalability on large graphs, we approximate the resulting diffusivity-driven attention kernel with a low‑rank modulated node attention mechanism (Eq. \eqref{eq:low_rank_atten}), reducing complexity from $\mathcal{O}(N^2)$ to nearly linear in $\mathcal{O}(N)$. 
Finally, the denoised embeddings are decoded back into multistep predictions with a lightweight MLP. 
Together, these components define our scalable spatiotemporal Transformer (ScaleSTF) model to predict the dynamics of large-scale urban networks with both accuracy and efficiency.
}


\subsection{Overall Framework}
{As shown in Fig. \ref{fig:overview} (b), the overall process of ScaleSTF has three stages, which can be formulated as follows:}
\begin{equation}
\begin{aligned}
    &\mathbf{h}_t^{i,(0)}=\texttt{InputEmbedding}(\mathbf{X}_{t-W:t}^{i}),~\forall i=\{1,\dots,N\}, \\
    &\mathbf{H}_t^{(\ell)}=\texttt{NeuralDiff}(\mathbf{H}_t^{(\ell-1)}),~\forall \ell=\{1,\dots,L\}, \\
    &\mathcal{X}_{t:t+H}=\texttt{MLP}(\mathbf{H}_t^{(L)}).
\end{aligned}    
\end{equation}
where $\texttt{InputEmbedding}(\cdot):\mathbb{R}^{W\times d_{\text{in}}}\mapsto\mathbb{R}^{D}$ summarizes the time series as a dense latent vector, $\mathbf{H}_t^{(\ell)}=\{\mathbf{h}_t^{0,(\ell)},\dots,\mathbf{h}_t^{N,(\ell)}\}\in\mathbb{R}^{N\times D}$ is the set of node representations in the $\ell$-th layer, $\texttt{MLP}(\cdot):\mathbb{R}^D\mapsto\mathbb{R}^{H\times d_{\text{out}}}$ generates the multistep predictions and $\texttt{NeuralDiff}$ propagates the message of all node pairs.
Next, we will elaborate on the detailed design strategies on the structure of ScaleSTF.

\subsection{Observation Encoding}\label{sec:input_enc}
$\texttt{InputEmbedding}(\cdot)$ converts the time series of a sensor to node states in the latent space using a neural mapping layer and combines it with several learnable embeddings.  
However, expanding the input dimension $d_{\text{in}}$ to a large latent dimension can produce a large feature tensor $\mathcal{H}\in\mathbb{R}^{N\times W\times D}$ with potential redundancy and increased complexity. Instead, ScaleSTF compresses the feature embedding into a reduced-order vector and concatenates it with spatiotemporal embeddings:
\begin{equation}
\begin{aligned}
    &\mathbf{z}_t^{i,(0)}= \textbf{W}^{(0)}\texttt{Flatten}(\mathbf{X}_{t-W:t}^{i})+\textbf{b}^{(0)},~\forall i=\{1,\dots,N\}, \\
    &\mathbf{h}_t^{i,(0)} = \texttt{Concat}(\mathbf{z}_t^{i,(0)} ; \mathbf{e}^{\text{N}}_i ; \mathbf{e}^{\text{TiD}}_t ; \mathbf{e}^{\text{DiW}}_t), \\
\end{aligned}
\end{equation}
where $\texttt{Flatten}(\cdot):\mathbb{R}^{W\times d_{\text{in}}}\mapsto\mathbb{R}^{Wd_{\text{in}}}$ folds the last dimension of the tensor, $\mathbf{W}^{(0)}\in\mathbb{R}^{D_{\text{feature}}\times Wd_{\text{in}}}$, $\mathbf{b}^{(0)}\in\mathbb{R}^{D_{\text{feature}}}$ constitute the shared feature transformation, $\mathbf{e}^{\text{N}}_i\in\mathbb{R}^{D_{\text{N}}}$, $\mathbf{e}^{\text{TiD}}_t\in\mathbb{R}^{D_{\text{TiD}}}$, $\mathbf{e}^{\text{DiW}}_t\in\mathbb{R}^{D_{\text{DiW}}}$ are the learnable node, time-in-day, and day-in-week embeddings respectively \cite{shao2022spatial}, and $\mathbf{h}_t^{i,(0)}\in\mathbb{R}^{D}$ is the final node representation. 

To increase the model capacity and add nonlinearity, a two-layer MLP is applied to all node representations subsequently:
\begin{equation}
   \mathbf{H}_t^{(0)} = \sigma(\mathbf{W}^{(2)}(\sigma(\mathbf{W}^{(1)}\mathbf{H}_t^{(0)}+\mathbf{b}^{(1)}))+\mathbf{b}^{(2)})+\mathbf{H}_t^{(0)}.
\end{equation}

\subsection{Low-Rank Adapted Node Embedding}\label{sec:low-rank-emb}
On the one hand, the dimension $D_{\text{N}}$ of the node embedding should be large enough to ensure the distinguishability of the node representations. For example, trained index embeddings should guarantee $N!$ possible node permutations to ensure higher expressivity.
On the other hand, as indicated in Fig. \ref{fig:redundancy}, introducing a learnable vector with a large enough dimension to each node can significantly increase the rank of node representations, thereby causing overparameterization.

Recall that network dynamics are in a subspace smaller than the network dimension and these low-dimensional structures can be obtained through POD in Eq. \eqref{eq:POD}. This provides a feasible way to obtain these low-dimensional embeddings. Formally, scalar functions $\alpha_k(t)$ in Eq. \eqref{eq:POD} are obtained by projecting the nodal state $\mathbf{X}_{t}$ on the respective agitation mode:
\begin{equation}
    \alpha_k(t) = \phi_k^{\mathsf{T}}\mathbf{X}_{t},
\end{equation}
where the orthonormal modes are typically calculated using singular vectors of the nodal state matrix. However, computing the singular value decomposition has high computational complexity, especially for large matrices.

To address this issue, we suggest using learnable matrices to compose node-specific patterns in a low-dimensional space. 
Specifically, we assign a learnable \textit{node adapter} shared by all nodes as $\mathbf{P}\in\mathbb{R}^{r\times D_{\text{N}}}$, where $r\ll \min\{N, D_{\text{N}}\}$ is the rank, then the composed low-rank adapted embedding (LRAE) is given as follows to approximate the matrix version of Eq. \eqref{eq:POD}:
\begin{equation}\label{eq:lare}
    \mathbf{E}_{\text{N}}\approx \mathbf{E}_{\text{N}}^r \mathbf{E}_{\text{N}}^{r,\mathsf{T}}\mathbf{X}_{t} 
    =\mathbf{E}_{\text{N}}^r\mathbf{P}\in\mathbb{R}^{N\times D_{\text{N}}},
\end{equation}
where $\mathbf{E}_{\text{N}}^r\in\mathbb{R}^{N\times r}$ is the learnable dictionary of node-specific parameters. This low-rank reparameterization assumes that the learned node-specific patterns reside in a low intrinsic dimension. LRAE allows the shared model to capture individual patterns by optimizing rank-decomposed matrices rather than dense matrices, alleviating the difficulty of parameter learning.

\subsection{Scalable Modulated Node Attention}
\begin{figure}[!htbp]
  \centering
  \includegraphics[width=0.9\columnwidth]{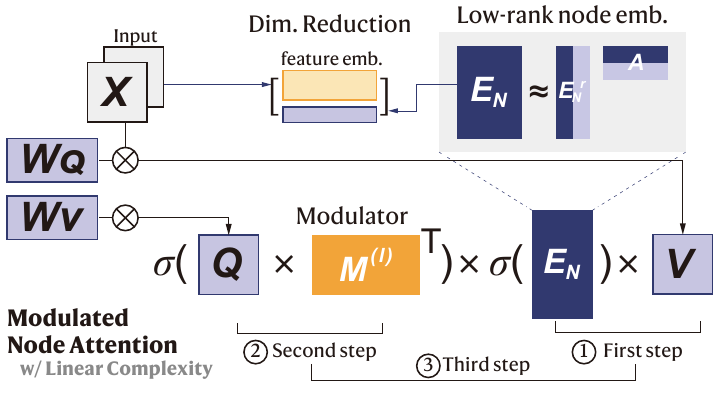}
  \caption{The computation flow of the proposed modulated node attention.}
  \label{fig:attention}
\end{figure}

According to the corollary in Section \ref{sec:graph-denoising}, \texttt{NeuralDiff} should base on a diffusivity measure to gradually denoise the graph and reduce the energy. To maximize the denoising effect on a global scale, we consider all pairwise diffusion. However, full attention computation of standard (graph) Transformers has $\mathcal{O}(N^2)$ space and time complexity, which is computationally prohibitive for large-scale graphs \cite{liu2024largest,nie2024channel}. 

To resolve the main computational bottleneck, we propose simplifying the self-attention with a lightweight node attention.
Recall that given the hidden representation at $(\ell-1)$-th layer, the canonical \texttt{Transformer} block is formulated as follows:
\begin{equation}
\begin{aligned}
    \mathbf{H}_t^{(\ell-1)}&=\texttt{LayerNorm}(\mathbf{H}_t^{(\ell-1)}+\mathbf{A}_s(\mathbf{H}_t^{(\ell-1)})\mathbf{H}_t^{(\ell-1)}\mathbf{W}_{\text{V}}), \\
     \mathbf{H}_t^{(\ell)}&=\texttt{LayerNorm}(\mathbf{H}_t^{(\ell-1)}+\texttt{MLP}(\mathbf{H}_t^{(\ell-1)})), \\
\end{aligned}
\end{equation}
with $\mathbf{W}_{\text{V}}\in\mathbb{R}^{D\times D_{\text{m}}}$ and $\mathbf{A}_s(\mathbf{H})\in\mathbb{R}^{N\times N}$ being the self-attention matrix defined as:
\begin{equation}\label{eq:full_att}
\begin{aligned}
\mathbf{A}_s(\mathbf{H}_t^{(\ell-1)})&=\texttt{SelfAtten}(\mathbf{H}_t^{(\ell-1)},\mathbf{H}_t^{(\ell-1)},\mathbf{H}_t^{(\ell-1)})),\\
&=\texttt{Softmax}\left(\frac{\mathbf{H}_t^{(\ell-1)}\mathbf{W}_{\text{Q}}\mathbf{W}_{\text{K}}^{\mathsf{T}}\mathbf{H}_t^{(\ell-1),\mathsf{T}}}{\sqrt{D_{\text{m}}}}\right),\\
\end{aligned}
\end{equation}
where $\mathbf{W}_{\text{Q}},\mathbf{W}_{\text{K}}\in\mathbb{R}^{D\times D_{\text{m}}}$, and $D_{\text{m}}$ is the model dimension.

Since the key (query) matrices can be treated as dynamic node representations of data flows, we can \textbf{approximate them using static node representations}, i.e., the LRAE $\mathbf{E}^{\text{N}}$. To achieve this, we elaborate a layer-wise attentive \textit{modulator} $\mathbf{M}^{(\ell)}\in\mathbb{R}^{D_{\text{N}}\times D_{\text{m}}}$ that 
is learned end-to-end from data to decompose the attention matrix and approximate Eq. \eqref{eq:full_att} as:
\begin{equation}\label{eq:factorized_att}
\mathbf{A}_s(\mathbf{H}_t^{(\ell-1)})\approx\texttt{Softmax}\left(\frac{\mathbf{H}_t^{(\ell-1)}\mathbf{W}_{\text{Q}}\mathbf{M}^{(\ell-1),\mathsf{T}}\mathbf{E}_{\text{N}}^{\mathsf{T}}}{\sqrt{D_{\text{m}}}}\right).
\end{equation}

In practice, since $D_{\text{N}}<D_{\text{m}}$, Eq. \eqref{eq:factorized_att} admits a low-rank factorized attention matrix, which preserves the most significant correlations for network dynamics prediction. By further decoupling the node modulation $\mathbf{E}_{\text{N}}\mathbf{M}\in\mathbb{R}^{N\times D_{\text{m}}}$, we can obtain a simplified updating rule as:
\begin{equation}\label{eq:low_rank_atten}
\begin{aligned}
    &\mathbf{A}_s(\mathbf{H}_t)\mathbf{H}_t\mathbf{W}_{\text{V}}=\texttt{Softmax}\left(\frac{\mathbf{H}_t\mathbf{W}_{\text{Q}}\mathbf{M}^{\mathsf{T}}\mathbf{E}_{\text{N}}^{\mathsf{T}}}{\sqrt{D_{\text{m}}}}\right)\mathbf{H}_t\mathbf{W}_{\text{V}}, \\
    &\approx\texttt{Softmax}\left(\frac{\mathbf{H}_t\mathbf{W}_{\text{Q}}\mathbf{M}^{\mathsf{T}}}{\sqrt{D_{\text{m}}}}\right)\left(\texttt{Softmax}(\mathbf{E}_{\text{N}}^{\mathsf{T}})\mathbf{H}_t\mathbf{W}_{\text{V}}\right),
\end{aligned}
\end{equation}
where the superscript is omitted to ease the notation, and $\texttt{Softmax}(\mathbf{E}_{\text{N}}^{\mathsf{T}})$ encourages the right stochasticity of attention maps. Eq. \eqref{eq:low_rank_atten} is the final modulated node attention to achieve an efficient surrogate of the standard spatial attention for large-scale graphs. The computation flow is shown in Fig. \ref{fig:attention}.

{
Note that using low-rank factorization to approximate the full attention matrix is guaranteed with a bounded error. We provide the following analysis to elaborate on this property.
\begin{lemma}[The low-rankness of modulated attention matrix]
Given any $\mathbf{Q},\mathbf{K},\mathbf{V}\in\mathbb{R}^{N\times D}$ and $\mathbf{W}_{\text{Q}},\mathbf{W}_{\text{K}},\mathbf{W}_{\text{V}}\in\mathbb{R}^{D\times D_{\text{m}}}$, for any column vector $\mathbf{h}_{\text{V}}\in\mathbb{R}^{N}$ of $\mathbf{V}\mathbf{W}_{\text{V}}$, there exists a low-rank matrix $\Tilde{\mathbf{A}}\in\mathbb{R}^{N\times N}$ that satisfies:
\begin{equation}
    P(\|\Tilde{\mathbf{A}}\mathbf{h}_{\text{V}}^{\mathsf{T}}-\mathbf{A}\mathbf{h}_{\text{V}}^{\mathsf{T}} \| <\epsilon \|\mathbf{A}\mathbf{h}_{\text{V}}^{\mathsf{T}} \|)>1-\mathcal{O}(1),
\end{equation}
where the low-rank matrix can become $\Tilde{\mathbf{A}}=\sigma(\mathbf{Q}\mathbf{M}^{\mathsf{T}})\sigma(\mathbf{E}^{\mathsf{T}})$ with $\text{rank}(\mathbf{E})=\Theta(\text{log}(N))$.
\end{lemma}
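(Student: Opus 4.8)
The plan is to recognize this lemma as an instance of the Johnson--Lindenstrauss (JL) type low-rank approximation of softmax attention matrices, following the line of argument in the Linformer analysis \cite{wang2020linformer} but adapted to the modulated factorization used here. First I would write $\mathbf{A} = \sigma(\mathbf{Q}\mathbf{W}_{\text{Q}}\mathbf{W}_{\text{K}}^{\mathsf{T}}\mathbf{K}^{\mathsf{T}})$ where $\sigma$ is the row-wise softmax, and observe that each row of $\mathbf{A}$ lies on the probability simplex, so $\mathbf{A}$ is row-stochastic and its action on any vector $\mathbf{h}_{\text{V}}$ is a convex combination. The key is to show that the $N\times N$ matrix $\mathbf{A}$ admits a rank-$k$ surrogate $\Tilde{\mathbf{A}}$ with $k = \Theta(\log N)$ such that $\Tilde{\mathbf{A}}\mathbf{h}_{\text{V}}$ is a multiplicative $\epsilon$-approximation of $\mathbf{A}\mathbf{h}_{\text{V}}$ with high probability.

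\textbf{Key steps.} I would proceed as follows. (i) Apply a JL random projection argument: for a random matrix $\mathbf{R}\in\mathbb{R}^{N\times k}$ with appropriately scaled i.i.d.\ entries and $k=\Theta(\log N / \epsilon^2)$, the distance-preserving property gives $\|\mathbf{A}\mathbf{R}\mathbf{R}^{\mathsf{T}}\mathbf{h}_{\text{V}}^{\mathsf{T}} - \mathbf{A}\mathbf{h}_{\text{V}}^{\mathsf{T}}\| < \epsilon\|\mathbf{A}\mathbf{h}_{\text{V}}^{\mathsf{T}}\|$ with probability $1-\mathcal{O}(1)$ (more precisely, exponentially close to $1$). (ii) Set $\Tilde{\mathbf{A}} = \mathbf{A}\mathbf{R}\mathbf{R}^{\mathsf{T}}$, which has rank at most $k=\Theta(\log N)$; then identify the two factors with the learnable objects in the model, i.e.\ show that the product structure $\sigma(\mathbf{Q}\mathbf{M}^{\mathsf{T}})\sigma(\mathbf{E}^{\mathsf{T}})$ can realize such a rank-$\Theta(\log N)$ matrix by choosing $\mathbf{E}\in\mathbb{R}^{N\times k}$ with $\text{rank}(\mathbf{E})=\Theta(\log N)$ and letting the modulator $\mathbf{M}$ absorb the remaining linear transformation. (iii) Combine the probability bounds via a union bound over the (constantly many, or polynomially many) column vectors $\mathbf{h}_{\text{V}}$ of $\mathbf{V}\mathbf{W}_{\text{V}}$ to conclude the stated high-probability guarantee, noting that $\mathcal{O}(1)$ in the exponent is absorbed since $k=\Theta(\log N)$ keeps the failure probability $o(1)$.

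\textbf{Main obstacle.} The hard part will be bridging the gap between the existence of \emph{some} rank-$\Theta(\log N)$ approximant (which follows directly from JL) and the claim that this approximant can be written specifically in the \emph{modulated softmax form} $\sigma(\mathbf{Q}\mathbf{M}^{\mathsf{T}})\sigma(\mathbf{E}^{\mathsf{T}})$ with softmax nonlinearities on both factors. The softmax on $\sigma(\mathbf{E}^{\mathsf{T}})$ enforces column-stochasticity, which is not automatic from a generic low-rank factorization; I would handle this by exploiting the freedom in choosing $\mathbf{E}$ and $\mathbf{M}$ as \emph{trainable} parameters --- arguing that since $\mathbf{A}$ is itself row-stochastic and the softmax map is surjective onto the relevant simplex structure, one can reparameterize a JL-type factor into the required stochastic form without increasing the rank beyond $\Theta(\log N)$, at the cost of a possibly larger but still constant-factor-controlled approximation error. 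A secondary subtlety is making the $1-\mathcal{O}(1)$ statement precise; I would interpret it as $1-o(1)$ (failure probability vanishing in $N$) and verify that the JL parameters $k=\Theta(\log N)$ indeed deliver this.
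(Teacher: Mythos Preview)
Your proposal is correct and follows essentially the same approach as the paper: both reduce to the Linformer-style Johnson--Lindenstrauss argument with a random projection $\mathbf{E}\in\mathbb{R}^{N\times r}$, $r=\Theta(\log N)$, applied to $\sigma(\mathbf{Q}\mathbf{K}^{\mathsf{T}})\mathbf{V}$. The only cosmetic difference is the bridge to the modulated form: the paper posits the decomposition $\mathbf{M}=\mathbf{E}^{\mathsf{T}}\mathbf{K}$ and then writes $\sigma(\mathbf{Q}\mathbf{M}^{\mathsf{T}})\sigma(\mathbf{E}^{\mathsf{T}})=\sigma(\mathbf{Q}\mathbf{K}^{\mathsf{T}}\mathbf{E})\sigma(\mathbf{E}^{\mathsf{T}})\approx\sigma(\mathbf{Q}\mathbf{K}^{\mathsf{T}})\sigma(\mathbf{E}\mathbf{E}^{\mathsf{T}})$ before invoking JL, whereas you start from $\mathbf{A}\mathbf{R}\mathbf{R}^{\mathsf{T}}$ and argue the factors can be reparameterized into the softmax form; both treatments leave that softmax-commutation step at the same heuristic level, so your ``main obstacle'' is exactly the step the paper also does not rigorously close.
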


\begin{proof}
We assume that the modulation matrix can be decomposed as $\mathbf{M}=\mathbf{E}^{\mathsf{T}}\mathbf{K}\in\mathbb{R}^{D_{\text{N}}\times D_{\text{m}}}$, then the simplified attention matrix in Eq. \eqref{eq:low_rank_atten} can be approximated as:
\begin{equation}
    \begin{aligned}
    \Tilde{\mathbf{A}}&=\sigma(\mathbf{Q}\mathbf{M}^{\mathsf{T}})\sigma(\mathbf{E}^{\mathsf{T}}), \\
    &=\sigma(\mathbf{Q}\mathbf{K}^{\mathsf{T}}\mathbf{E})\sigma(\mathbf{E}^{\mathsf{T}}),\\
    &\approx\sigma(\mathbf{Q}\mathbf{K}^{\mathsf{T}})\sigma(\mathbf{E}\mathbf{E}^{\mathsf{T}}).
    \end{aligned}
\end{equation}

If $\mathbf{E}\in\mathbb{R}^{N\times r}$ is a random projection matrix with i.i.d. entries sampled from a Gaussian $\mathcal{N}(0,1/r)$, it invokes the Johnson-Lindenstrauss condition in \cite{wang2020linformer} when $r=c\text{log}(N/\epsilon^2-\epsilon^3)$:
    \begin{equation*}
    \begin{aligned}
    P&(\|\sigma(\mathbf{Q}\mathbf{K}^{\mathsf{T}})\mathbf{E}\mathbf{E}^{\mathsf{T}}\mathbf{V}-\sigma(\mathbf{Q}\mathbf{K}^{\mathsf{T}})\mathbf{V} \| <\epsilon \|\sigma(\mathbf{Q}\mathbf{K}^{\mathsf{T}})\mathbf{V} \|) \\
    &>1-\mathcal{O}(1),
    \end{aligned}
    \end{equation*}
where $c$ is a constant. For detailed derivations, refer to \cite{wang2020linformer}.
\end{proof}
}

\subsection{Model Complexity}
ScaleSTF has \textbf{linear complexity} in both temporal and spatial dimensions. For temporal processing, ScaleSTF adopts $\texttt{MLP}$ to transform time series, which entails $\mathcal{O}(WD_{\text{feature}}d_{\text{in}})$ complexity.
For spatial processing, we can first calculate and store the right part of Eq. \eqref{eq:low_rank_atten}, then multiply it by the left part. Specifically, we can: (1) compute $\texttt{Softmax}(\mathbf{E}_{\text{N}}^{\mathsf{T}})\mathbf{H}_t\mathbf{W}_{\text{V}}$ in $\mathcal{O}(ND_{\text{N}}D)$ complexity; (2) compute $\texttt{Softmax}({\mathbf{H}_t\mathbf{W}_{\text{Q}}\mathbf{M}^{\mathsf{T}}})$ in $\mathcal{O}(ND_{\text{m}}(D+D_{\text{N}}))$ complexity; and then (3) multiply the two results in $\mathcal{O}(ND_{\text{m}}D_{\text{N}})$ complexity.
In general, ScaleSTF scales linearly with respect to spatial and temporal dimensions, making it efficient for large-scale networks.

\section{Experiments}\label{sec:exp}
This section performs evaluations using both real-world and synthetic large-scale networks. We compare ScaleSTF to advanced baselines in benchmark tasks covering networked urban systems from transportation, power production, to smart meters. Then discussions and analysis are provided.

\subsection{Experimental Setup}

\begin{table}[!htbp]
\centering
\centering
\setlength{\abovecaptionskip}{0.cm}
\setlength{\belowcaptionskip}{-0.0cm}
\caption{{Statistics of large-scale and medium-scale graph datasets.}}
\label{tab_datasets}
\small
\centering
\begin{small}
\setlength{\tabcolsep}{1pt}
\resizebox{0.99\columnwidth}{!}{
\begin{tabular}{c| c| c| c| c| c}
\toprule
 \multicolumn{1}{c|}{\textbf{Datasets}} & \textbf{Type} & \textbf{Steps} & \textbf{Nodes} & \textbf{Edges} & \textbf{Interval}\\
 \midrule
{GLA} & traffic volume & 525,888 & 3,834 & 98,703 & 15 min \\
{GBA} & traffic volume & 525,888 & 2,352 & 61,246 & 15 min \\
{PV-US} &  solar power & 52,560 & 5,016 & 417,199 & 30 min \\
{CER-En} &  smart meters & 52,560 & 6,435 & 639,369 & 30 min \\
\midrule 
{AirQuality} &  {PM2.5 pollutant} & {8,760} & {437} & {2,699} & {60 min} \\
{Elergone} & {load profiles} & {140,256} & {370} & {-} & {15 min} \\
\midrule 
{GP-VAR} &  synthetic & 30,000 & \multicolumn{2}{c|}{tunable}  & N.A. \\
\bottomrule
\end{tabular}}
\end{small}
\end{table}

\noindent\textbf{Datasets.}
We conduct evaluations on four large-scale networked urban datasets in the real world, including GLA and GBA from the LargeST traffic flow benchmark \cite{liu2024largest}, PV-US from the National Renewable Energy Lab, and CER-En from the Irish Commission for Energy Regulation Smart Metering Project.
{Two medium-scale datasets including AirQuality \cite{zheng2013u} and Elergone are used to benchmark our model with the state-of-the-art.}
A synthetic graph system GP-VAR \cite{cini2024taming} is also adopted to control the experimental conditions. 
Brief descriptions about the adopted datasets are shown in Tab. \ref{tab_datasets}.

\begin{table*}
\renewcommand{\arraystretch}{1.1} 
    \centering
    \setlength{\abovecaptionskip}{0.cm}
    \setlength{\belowcaptionskip}{-0.0cm}
    \setlength{\tabcolsep}{5.pt}
    \caption{Results of large-scale spatiotemporal graph forecasting on GLA, GBA, PV-US, and CER-En benchmarks.}
    \label{tab:main}
\resizebox{0.99\textwidth}{!}{
    \begin{tabular}{c|c|ccc>{\columncolor{gray25}}c|ccc>{\columncolor{gray25}}c|ccc>{\columncolor{gray25}}c|ccc>{\columncolor{gray25}}c}
    \toprule 
    \multicolumn{2}{c|}{\textbf{Dataset}} & \multicolumn{4}{c|}{\textbf{GLA}} & \multicolumn{4}{c|}{\textbf{GBA}} & \multicolumn{4}{c|}{\textbf{PV-US}}& \multicolumn{4}{c}{\textbf{CER-En}} \tabularnewline
    \midrule 
    \textbf{Method} & \textbf{Metric} & \textbf{@3} & \textbf{@6} & \textbf{@12} & \textbf{Avg.} & \textbf{@3} & \textbf{@6} & \textbf{@12} & \textbf{Avg.} & \textbf{@3} & \textbf{@6} & \textbf{@12} & \textbf{Avg.} & \textbf{@3} & \textbf{@6} & \textbf{@12} & \textbf{Avg.} \tabularnewline
    \midrule 
\multirow{2}{*}{DCRNN} & MAE & 18.33 & 22.70 & 29.45  & 22.73 & 18.25 & 22.25 & 28.68 & 22.35 & 2.42 &3.70  & 5.73 & 3.76 & 0.27 & 0.29 & 0.32 & 0.29  \tabularnewline
 & RMSE  & 29.13 & \underline{35.55} & 45.88 & 35.65 & 29.73 & 35.04 & 44.39 & 35.26 & 6.17 & 9.50 & 14.29 & 10.13 & 0.68 & 0.74 & 0.80 & 0.72 \tabularnewline
\midrule 
\multirow{2}{*}{AGCRN} & MAE & 17.57 & \underline{20.79} & \underline{25.01} & \underline{20.61}  & 18.11 & \underline{20.86} & \underline{24.06} & \underline{20.55} &  2.59 & 3.27 & 4.00 &3.15 & 0.28 & 0.29 &  0.31 &  0.29 \tabularnewline
 & RMSE  & 30.83 & 36.09 & 44.82 & 36.23 & 30.19 & 34.42 & \underline{39.47} & 33.91 & 6.21 & 7.73 & 9.18 & 7.53 & 0.65 & \underline{0.68} & \underline{0.74} & 0.68  \tabularnewline
\midrule 
\multirow{2}{*}{STGCN} & MAE & 19.87 & 22.54 & 26.48 & 22.48 & 20.62 & 23.19 & 26.53 & 23.03 & 2.61 &4.00  &5.92  & 3.95 & 0.27 & 0.30 & 0.33 &  0.30  \tabularnewline
 & RMSE  & 34.01 & 38.57 & 45.61 & 38.55 & 33.81 & 37.96 & 43.88 & 37.82 &6.58  & 10.17 & 14.63 & 10.56 & 0.69 & 0.75 & 0.82 & 0.74  \tabularnewline
\midrule 
\multirow{2}{*}{GWNet} & MAE & \underline{17.30} & 21.22 & 27.25 & 21.23 & \underline{17.74} & 20.98 & 25.39 & 20.78 & \underline{2.05} & 3.02 & 3.82 &  2.87 & 0.27 & 0.29 & 0.32 & 0.29   \tabularnewline
 & RMSE  & \underline{27.72} & 33.64 & \underline{43.03} & \underline{33.68} & \underline{28.70} & \underline{33.50} & 40.30 & \underline{33.32} & \underline{5.64} & 7.80 & 9.53 & 7.63 & 0.68 & 0.74  & 0.80 & 0.72  \tabularnewline
\midrule 
\multirow{2}{*}{TSMixer} & MAE & 21.76 & 27.06 & 31.59 &25.86 & 18.95 & 22.27 & 25.34 & 21.63 & 2.11 & \underline{2.86} & 3.72 & \underline{2.80} & \underline{0.26} & \underline{0.28} & \underline{0.30} & \underline{0.28}  \tabularnewline
 & RMSE  & 33.72 & 40.76 &47.40  & 39.94 & 30.46 & 35.65 & 40.11  & 34.90 & 5.89 & \underline{7.53} & 8.95 & \underline{7.27} & \underline{0.63} & \underline{0.68} & \underline{0.74} & \underline{0.66}  \tabularnewline
\midrule 
\multirow{2}{*}{Transformer$^*$} & MAE & 21.69 & 30.44 & 39.21 & 31.17 &21.30  &  27.58 & 42.91 & 30.02 &2.43  & 3.08 & \underline{3.45}  & 2.92 & 0.27 & 0.29 &0.31  & 0.29  \tabularnewline
 & RMSE  & 33.32 & 42.99 & 61.13 & 50.16 & 35.10 & 42.89 &  60.00 & 48.22 & 6.20 & 7.74 & \underline{8.46} & 7.39  & 0.64 & 0.69 & 0.75 & 0.69  \tabularnewline
\midrule 
\multirow{2}{*}{iTransformer} & MAE & 18.90 & 25.76 & 36.58 & 26.13 & 19.33 &25.64  & 35.89 & 26.00 & 2.62 & 3.82 & 5.87 & 3.91 & 0.28 & 0.32 & 0.35 & 0.31  \tabularnewline
 & RMSE  &30.94 & 41.49 & 57.74 & 43.35 & 32.00 & 41.02 &55.98  & 42.68 & 6.47 &9.65  & 14.35 & 10.29 & 0.73 &0.83  &  0.95 & 0.83  \tabularnewline
\midrule 
\cmidrule[0.5pt]{1-18} 
\multirow{2}{*}{\textbf{ScaleSTF (ours)}} & MAE & \textbf{15.56} & \textbf{18.50} & \textbf{22.43} & \textbf{18.38} & \textbf{16.23} & \textbf{18.81} & \textbf{22.10} & \textbf{18.59} & \textbf{2.03} & \textbf{2.75} & \textbf{3.35} & \textbf{2.60} & \textbf{0.24} & \textbf{0.26} & \textbf{0.28} & \textbf{0.25} \tabularnewline
 & RMSE & \textbf{25.99} & \textbf{31.10} & \textbf{38.24} & \textbf{31.43} & \textbf{27.86} & \textbf{31.85} & \textbf{37.04} & \textbf{31.81} & \textbf{5.52} & \textbf{7.21} & \textbf{8.35} & \textbf{6.92} & \textbf{0.60} & \textbf{0.64} & \textbf{0.67} & \textbf{0.62} \tabularnewline
\cmidrule{1-18} 
\multirow{2}{*}{\textbf{Avg. Imp.}$^\dagger$} & MAE & \multicolumn{4}{c|}{\textbf{10.81\%}}  & \multicolumn{4}{c|}{\textbf{9.54\%}}& \multicolumn{4}{c|}{\textbf{7.14\%}}& \multicolumn{4}{c}{\textbf{10.71\%}} \tabularnewline
 & RMSE & \multicolumn{4}{c|}{\textbf{6.68\%}} & \multicolumn{4}{c|}{\textbf{4.53\%}}& \multicolumn{4}{c|}{\textbf{4.81\%}}& \multicolumn{4}{c}{\textbf{6.06\%}} \tabularnewline
\bottomrule
\end{tabular}
}
\begin{tablenotes} 
\footnotesize
{\item $^*$: Note that the canonical spatial attention runs out of memory on these large-scale benchmarks, and we only adopt the temporal attention for the \texttt{Transformer}. $^\dagger$: The average performance gains over the second-best models.
}
\end{tablenotes} 
\end{table*}

\noindent\textbf{Baselines.} 
Due to the large scales of the adopted datasets, we carefully select several applicable and competitive baselines, and they include: DCRNN~\cite{li2017diffusion}, AGCRN~\cite{bai2020adaptive}, STGCN~\cite{yu2017spatio}, GWNet~\cite{wu2019graph}, TSMixer~\cite{chen2023tsmixer}, Transformer, and iTransformer~\cite{liu2023itransformer}.
Please note that many advanced models such as STAEformer \cite{STAEformer}, D2STGNN \cite{shao2022decoupled}, and PDFormer \cite{jiang2023pdformer} have shown SOTA performance in medium-sized datasets. However, \textbf{they fail to function across our {large-scale} benchmarks} due to the high computational complexity. {Therefore, we only apply them in medium-scale datasets in Section \ref{subsec:transformer_compare}.}

\noindent\textbf{Implementation and Hyperparameters.}
All models are implemented using the TorchSpatiotemporal benchmark tool on a single NVIDIA RTX A6000 GPU (48GB).
Hyperparameters of all models are tuned using cross-validation, and we will release them as well as the reproducible codes after publication.

\begin{table*}
\renewcommand{\arraystretch}{1.2} 
    \centering
    \setlength{\tabcolsep}{1.pt}
    \caption{{Comparison with SOTA Transformer-based model in GP-VAR(-L), AirQuality, and Elergone datasets.}}
    \label{tab:gpvar}
\resizebox{0.99\textwidth}{!}{
    \begin{tabular}{c|cccc|>{\columncolor{gray25}}c>{\columncolor{gray25}}c>{\columncolor{gray25}}c>{\columncolor{gray25}}c|cccc|>{\columncolor{gray25}}c>{\columncolor{gray25}}c>{\columncolor{gray25}}c>{\columncolor{gray25}}c}
    \toprule 
    \multicolumn{1}{c|}{\textbf{Dataset}} & \multicolumn{8}{c|}{\textbf{GPVAR-L (600 nodes)}} & \multicolumn{8}{c}{\textbf{GPVAR (600 nodes)}}  \tabularnewline
    \midrule 
      \multirow{2}{*}{Method}& \multicolumn{4}{c|}{\textbf{Prediction error (MAE)}} & \multicolumn{4}{c|}{\textbf{Resource utilization}}  & \multicolumn{4}{c|}{\textbf{Prediction error (MAE)}} & \multicolumn{4}{c}{\textbf{Resource utilization}} \tabularnewline
 & \textbf{@3} & \textbf{@6} & \textbf{@12} & \textbf{Avg.} & \textbf{Batch/s} & \textbf{Memory} & \textbf{Param.} & \textbf{Batch Size}  & \textbf{@3} & \textbf{@6} & \textbf{@12} & \textbf{Avg.} & \textbf{Batch/s} & \textbf{Memory}& \textbf{Param.} & \textbf{Batch Size}   \tabularnewline
 \midrule
 \multirow{1}{*}{{PDFormer}} & {.5990} & {.7209} & {.8516}  & {.7022}  & {1.88} & {15.1 GB} &  {3.90 M} & {16}  & {.3470}  & {.3501} & {.3528} & {.3492} & {8.69}  & {8.8 GB}  &  {1.30 M}  & {16}  \tabularnewline
 \midrule
 \multirow{1}{*}{STAEformer} & \underline{.5876} & \textbf{.7109} & \underline{.8333} & \textbf{.6882} &  \underline{3.61} & \underline{13.0 GB} & \underline{2.70 M} & 16  & \underline{.3405} & \underline{.3463} & \underline{.3472} & \underline{.3419} & \underline{12.66} & \underline{6.9 GB} &  \underline{0.78 M} & 16 \tabularnewline
\midrule 
\cmidrule[0.5pt]{1-17} 
\multirow{1}{*}{\textbf{ScaleSTF (ours)}}& \textbf{.5713} & \underline{.7127} & \textbf{.8325} & \underline{.6907} & \textbf{68.09} & \textbf{2.1 GB} & \textbf{0.82 M} & {16}  & \textbf{.3403} & \textbf{.3450} & \textbf{.3468} & \textbf{.3415} & \textbf{104.75} & \textbf{2.0 GB} & \textbf{0.10 M} & {16} \tabularnewline
\bottomrule
\toprule 
    \multicolumn{1}{c|}{\textbf{Dataset}} & \multicolumn{8}{c|}{\textbf{{AirQuality (437 nodes)}}} & \multicolumn{8}{c}{\textbf{{Elergone (370 nodes)}}}  \tabularnewline
    \midrule 
      \multirow{2}{*}{Method}& \multicolumn{4}{c|}{\textbf{Prediction error (MAE)}} & \multicolumn{4}{c|}{\textbf{Resource utilization}}  & \multicolumn{4}{c|}{\textbf{Prediction error (MAE)}} & \multicolumn{4}{c}{\textbf{Resource utilization}} \tabularnewline
 & \textbf{@1} & \textbf{@2} & \textbf{@3} & \textbf{Avg.} & \textbf{Batch/s} & \textbf{Memory} & \textbf{Param.} & \textbf{Batch Size}  & \textbf{@3} & \textbf{@6} & \textbf{@12} & \textbf{Avg.} & \textbf{Batch/s} & \textbf{Memory}& \textbf{Param.} & \textbf{Batch Size}   \tabularnewline
 \midrule
 \multirow{1}{*}{{PDFormer}} & {11.42} & {15.66} & {18.30}  & {22.14} & {0.97} & {25.6 GB} & {4.0 M} & {32}  & {210.00} & {228.48} & {231.57} & {220.61}& {1.54} &  {20.1 GB} &  {4.0 M} & {32} \tabularnewline
 \midrule
 \multirow{1}{*}{{STAEformer}} & {\textbf{11.05}} & {\textbf{14.83}} & {\textbf{17.56}}  &  {\underline{21.97}} & {\underline{1.93}} & {\underline{20.7 GB}} & {\underline{3.3 M}} & {32}  & {\underline{202.02}} & {\underline{224.60}} & {\underline{225.69}} & {\underline{215.94}} & {\underline{2.78}} & {\underline{15.8 GB}} &  {\underline{3.2 M}} & {32} \tabularnewline
\midrule 
\cmidrule[0.5pt]{1-17} 
\multirow{1}{*}{\textbf{{ScaleSTF (ours)}}}& {\underline{11.19}} & {\underline{15.10}} & {\underline{17.90}} & {\textbf{21.93}} & {\textbf{50.35}} & {\textbf{1.8 GB}} & {\textbf{1.0 M}} & {32}  & {\textbf{199.10}} & {\textbf{222.41}} & {\textbf{208.62}} & {\textbf{208.83}} & {\textbf{47.09}} & {\textbf{1.8 GB}} & {\textbf{0.77 M}} & {32} \tabularnewline
\bottomrule
\end{tabular}
}
\end{table*}

\subsection{Performance Comparison in Short-term Benchmarks}
We first evaluate the performances to predict short-term dynamics. For all datasets, we set both the look-back window and the prediction horizon to 12 steps and report the error metrics.
Results of the model comparisons are shown in Table \ref{tab:main}. Generally, ScaleSTF consistently achieves SOTA performance in all metrics in all tasks.
Notably, compared to GNN- and Mixer-based models, ScaleSTF improves accuracy by a large margin, demonstrating the effectiveness of the Transformer-like architecture in learning graph representations. The comparison between ScaleSTF, Transformer, and iTransformer also justifies our physical inductive bias for large-scale STGs.

\begin{figure}[!htbp]
  \centering
  \includegraphics[width=1\columnwidth]{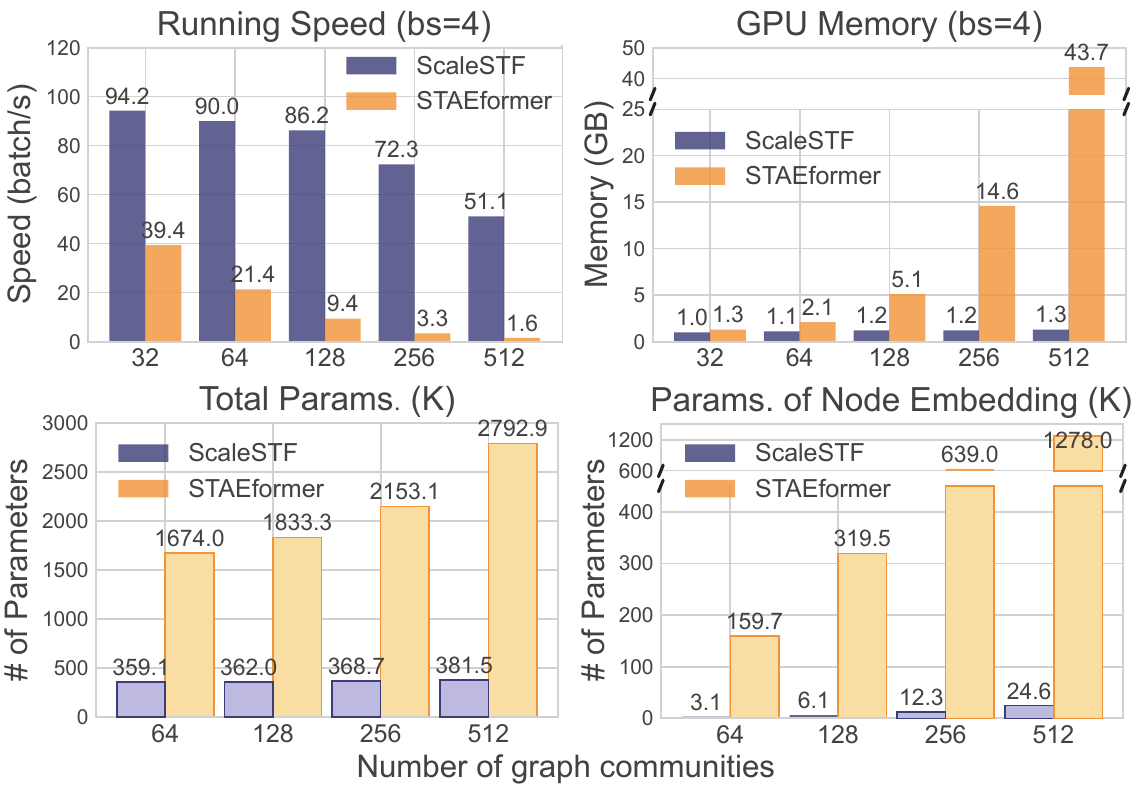}
  \caption{Model scalability with varying number of nodes (batch size = 4).}
  \label{fig:scalability}
\end{figure}

\subsection{Comparison with SOTA Transformers}\label{subsec:transformer_compare}
Next, we compare ScaleSTF with a SOTA STF model, STAEformer \cite{STAEformer}. Since STAEformer cannot work on the four large datasets used above with resource limitation, we adopted GP-VAR data to control the scale of generated graphs. {Two real-world medium-scale datasets including AirQuality and Elergone are also adopted to benchmark the performances}.

\noindent\textbf{Overall Performance.} 
Tab. \ref{tab:gpvar} shows overall performances of two models in accuracy and efficiency. ScaleSTF performs comparably with STAEformer in terms of accuracy, but it shows great superiority in computational efficiency and resource preservation. In particular, ScaleSTF provides up to \textbf{18x speed-up}, \textbf{7x memory reduction}, and \textbf{6x parameter savings} over the SOTA, indicating great potential for large networks.

\noindent\textbf{Scalability and Parameter Efficiency.} We further examine the scalability under varying numbers of nodes. 
A larger number of graph communities have more nodes.
Fig. \ref{fig:scalability} denotes that ScaleSTF shows surprisingly desirable scalability. Instead, STAEformer runs out of memory on graphs with thousands of nodes and has a slow running speed. In addition, low-rank designs significantly reduce model parameters, leading to an efficient architecture with much fewer parameters to optimize.

\subsection{Predicting Long-term Network Dynamics}
In addition to the short-term prediction, we also evaluate the long-term prediction performance.
All models are trained to predict next 192 steps with a historical window of 96 steps. Tab. \ref{tab:results_long} shows the results of model comparison. ScaleSTF shows great superiority in accuracy and has a low memory consumption comparable to MLPs. However, many complex models fail to complete these tasks with limited resources.

\begin{table}
\renewcommand{\arraystretch}{1.1} 
    \centering
    \setlength{\tabcolsep}{1.pt}
    \caption{{Results of long-term dynamics prediction (batch size is 4).}}
    \label{tab:results_long}
\resizebox{0.99\columnwidth}{!}{
    \begin{tabular}{c|cc|cc|cc|cc}
    \toprule 
    \multicolumn{1}{c|}{\textbf{Dataset}} & \multicolumn{4}{c|}{\textbf{GLA}} & \multicolumn{4}{c}{\textbf{GBA}}  \tabularnewline
    \midrule 
    \textbf{Method} & \textbf{MAE} & \textbf{RMSE} & \textbf{Memory} & \textbf{Speed} & \textbf{MAE} & \textbf{RMSE} & \textbf{Memory} & \textbf{Speed}\tabularnewline
    \midrule 
DCRNN &  \multicolumn{2}{c|}{OOT} & {27.3 GB} & {$<0.1$ B/s} &  \multicolumn{2}{c|}{OOT} & {20.1 GB} & {$<0.1$ B/s} \tabularnewline
\midrule 
AGCRN & \multicolumn{2}{c|}{OOM} & {$>$ 48 GB} & {--} & \multicolumn{2}{c|}{OOM} & {$>$ 48 GB} & {--} \tabularnewline
\midrule 
STGCN & 30.12 & 48.28 & 18.6 GB & {1.03 B/s} & 32.66 & 50.09 & 13.2 GB  & {1.67 B/s} \tabularnewline
\midrule 
GWNet & 28.70 & 46.92 & 27.9 GB & {0.60 B/s} & 29.77 & \underline{48.26} & 19.3 GB & {1.02 B/s} \tabularnewline
\midrule 
TSMixer & 29.61 & \underline{46.43} & \textbf{1.9 GB} & {\textbf{30.63 B/s}} & 30.87 & 51.78 & \textbf{1.7 GB} & {\textbf{47.15 B/s}} \tabularnewline
\midrule 
Transformer$^*$ & \multicolumn{2}{c|}{OOM} & {$>$ 48 GB} & {--} & \multicolumn{2}{c|}{OOM} & {$>$ 48 GB} & {--} \tabularnewline
\midrule 
iTransformer & \underline{28.35} & 47.81 & 13.9 GB & {4.37 B/s}  &  \underline{29.15} & 48.50 & 6.2 GB & {9.55 B/s} \tabularnewline
\midrule 
\cmidrule[0.5pt]{1-9} 
\textbf{ScaleSTF (ours)}  & \textbf{23.86} & \textbf{39.89} & \underline{2.5 GB} & {\underline{16.54 B/s}} & \textbf{24.36} & \textbf{42.77} & \underline{2.0 GB} & {\underline{25.39 B/s}} \tabularnewline
\cmidrule{1-9} 
\multirow{2}{*}{{\textbf{Avg. Imp.}$^\dagger$}} & \cellcolor{gray25}{MAE} & \multicolumn{3}{c|}{\cellcolor{gray25}{\textbf{15.84 \%}}} & \cellcolor{gray25}{MAE}  & \multicolumn{3}{c}{\cellcolor{gray25}{\textbf{16.43 \%}}}
\tabularnewline
 & \cellcolor{gray25}{RMSE} & \multicolumn{3}{c|}{\cellcolor{gray25}{\textbf{14.09 \%}}} & \cellcolor{gray25}{RMSE} & \multicolumn{3}{c}{\cellcolor{gray25}{\textbf{11.38 \%}}}
\tabularnewline
\bottomrule
\end{tabular}
}
\begin{tablenotes} 
\footnotesize
{\item OOT indicates that the training cannot be finished within an acceptable time budget, and OOM indicates out-of-memory.
}
\end{tablenotes} 
\end{table}

\begin{figure}[!htbp]
  \centering
  \includegraphics[width=0.95\columnwidth]{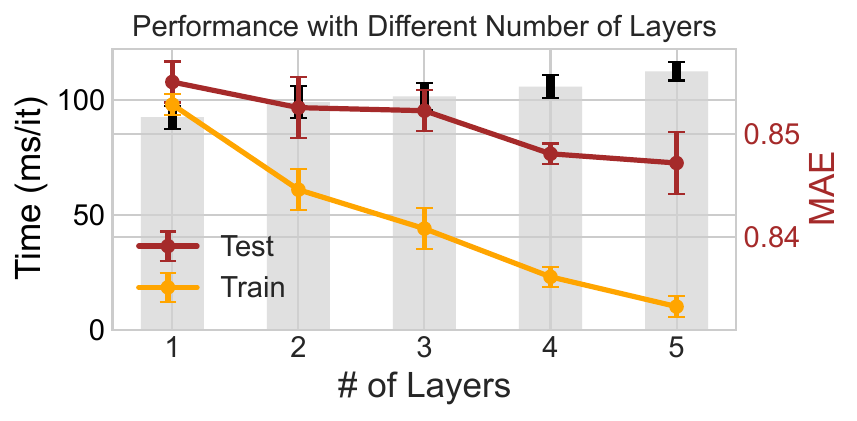}
  \caption{{Performance with different number of layers.}}
  \label{fig:error_bar}
\end{figure}

\subsection{Model Analysis}
\noindent\textbf{Study on the Model Depth.} 
Fig. \ref{fig:error_bar} plots the training time cost per iteration and {training/testing MAE} w.r.t. the number of neural diffusion layers. As observed, increasing the model depth can reduce {both training and testing} errors due to the layerwise denoising effect. But it does not significantly increase the computation time.
{In addition, while increasing the number of layers consistently reduces the training error, the improvement in test performance begins to slow down after a certain depth threshold. This suggests current dataset size (600 nodes) may be insufficient to support such increased capacity, resulting in reduced generalization. 
Furthermore, this phenomenon highlights the expressive power of our proposed model. The fact that deeper architectures can overfit implies that the model has sufficient capacity to accurately approximate the training distribution. With appropriate data scaling, its generalizability can be further improved. }

\noindent\textbf{Redundancy in ST-Transformers.} 
To justify our hypothesis, we illustrate the architectural redundancy in STFs in Fig. \ref{fig:redundancy}. The proposed LRAE shares a nuclear norm similar to the method in \cite{STAEformer}, but has a markedly reduced effective rank, alleviating the overparameterization issue in node embedding. In addition, our modulated attention can concentrate on dominant node patterns with a few large singular values (the 1st singular value is omitted for clearer visualization), reducing the redundancy in the self-attention matrix.

\begin{figure}[!htbp]
  \centering
  \includegraphics[width=1\columnwidth]{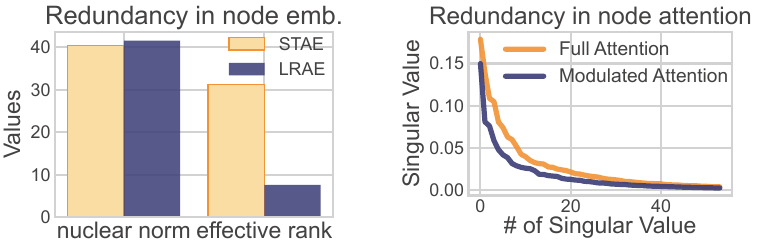}
  \caption{Examples of redundancy in ST-Transformers.}
  \label{fig:redundancy}
\end{figure}

\noindent\textbf{Visualization of the Learned Embedding.} Fig. \ref{fig:emb} shows the t-SNE visualization of the learned node embedding in GBA data. The dimension-reduced manifold clearly shows several clusters, revealing the existence of low-dimensional structures.

\begin{figure}[!htbp]
  \centering
  \includegraphics[width=0.8\columnwidth]{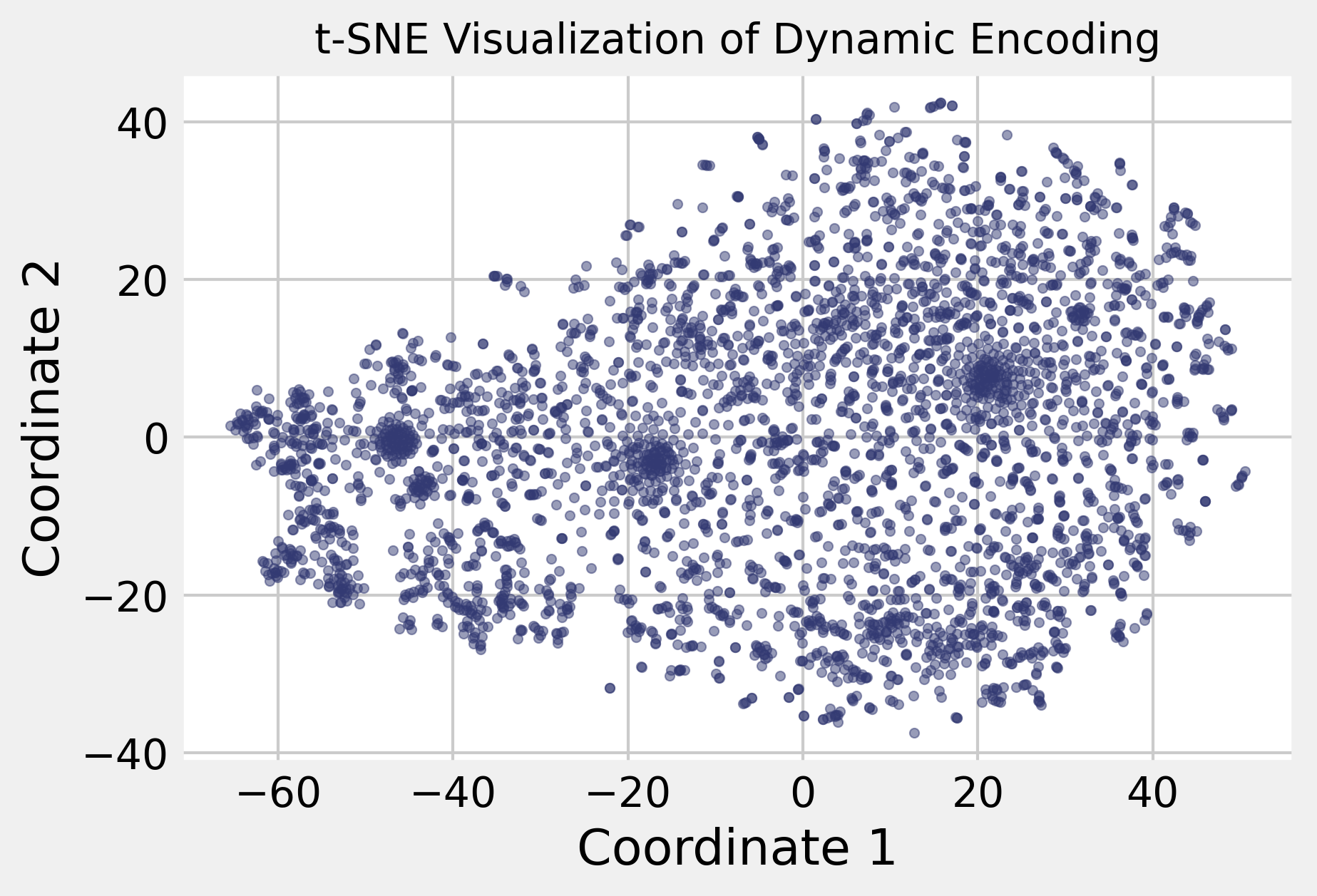}
  \caption{The t-SNE structure of the latent node embedding.}
  \label{fig:emb}
\end{figure}

{
\noindent\textbf{Prediction with Sparse/Noisy Observations.} }
Since our model is established by modeling a graph denoising process, it can naturally deal with missing data in the observation. We randomly mask out $80\%$ of the observations and train the model. Fig. \ref{fig:prediction} shows that our model can still produce desirable predictions even for intervals with very few data.
This result shows its potential for the spatiotemporal imputation task \cite{nie2024imputeformer}.

{We further quantitatively compare the robustness of models under different missing ratios (Fig. \ref{fig:missing-data}) and different levels of noise (Fig. \ref{fig:motivation} (b)). It is observed that due to the structured per-layer denoising (diffusion) process, our model shows better robustness than other non-structured models.}

\begin{figure}[!htbp]
  \centering
  \includegraphics[width=1\columnwidth]{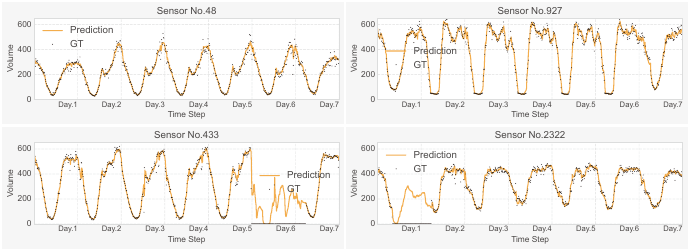}
  \caption{Prediction results with missing values.}
  \label{fig:prediction}
\end{figure}

\begin{figure}[!htbp]
  \centering
  \includegraphics[width=0.7\columnwidth]{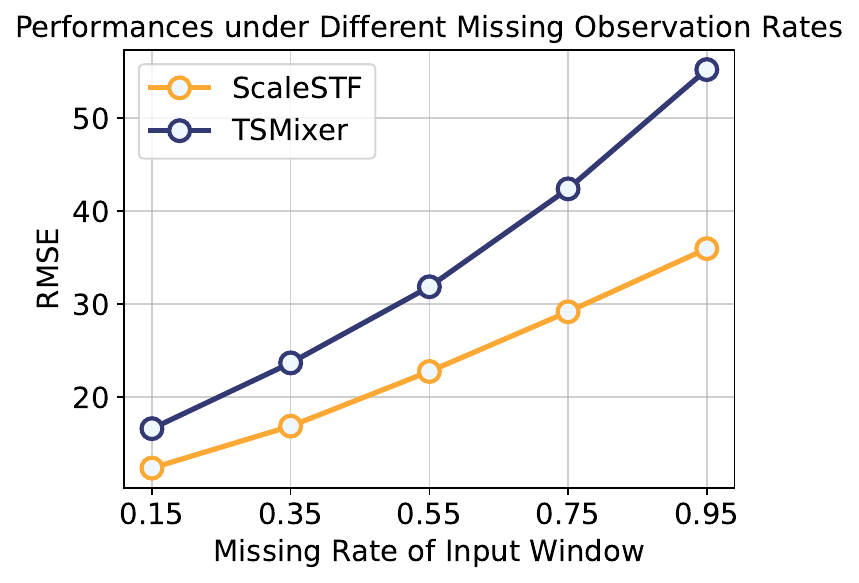}
  \caption{{Performance comparison under different missing values.}}
  \label{fig:missing-data}
\end{figure}

\begin{table}
\renewcommand{\arraystretch}{1.1} 
    \centering
    \setlength{\tabcolsep}{1.pt}
    \caption{{Ablation Studies (Long-term Prediction).}}
    \label{tab:ablation}
\resizebox{0.99\columnwidth}{!}{
    \begin{tabular}{c|cccc|cccc}
    \toprule 
    \multicolumn{1}{c|}{\textbf{Dataset}} & \multicolumn{4}{c|}{\textbf{GLA}} & \multicolumn{4}{c}{\textbf{GBA}}  \tabularnewline
    \midrule 
    \textbf{Method} & \textbf{MAE} & \textbf{Param.} & \textbf{Memory} & \textbf{Speed} & \textbf{MAE} & \textbf{Param.} & \textbf{Memory} & \textbf{Speed}\tabularnewline
\midrule 
w/ canonical attention & 25.23 & 2.0 M & 6.6 GB & 9.92 B/s & 25.42 & 2.0 M & 5.5 GB & 16.38 B/s \tabularnewline
\midrule 
w/o LRAE & 26.85 & 1.1 M & 2.5 GB & 16.60 B/s & 25.75 & 1.1 M & 2.0 GB & 25.58 B/s \tabularnewline
w/ dense embedding & 24.77 & 20.9 M & 2.6 GB & 15.82 B/s & 25.39 & 20.7 M & 2.1 GB & 24.90 B/s \tabularnewline
\midrule 
\textbf{ScaleSTF}  & {23.86} & {2.2 M} & {2.5 GB} & 16.54 B/s & {24.36} & {2.2 M} & {2.0 GB} &  25.39 B/s \tabularnewline
\bottomrule
\end{tabular}
}
\end{table}

\noindent{\textbf{Ablation Studies.}
To justify the modular designs, we perform ablation studies on long-term tasks. There are several findings in Tab. \ref{tab:ablation}: (1) Our proposed modulated node attention not only improves the efficiency of canonical self-attention, but also reduces prediction errors by resolving the redundancy in pairwise diffusivity; (2) LRAE plays a key role in reducing trainable parameters and helping in modeling node dynamics.
}

\noindent\textbf{{Impact of the Low-rank Embedding.} }
{The low-rank factorization is the cornerstone of our model. Fig. \ref{fig:rank} studies the impact of the rank parameter in Eq. \eqref{eq:lare}. As can be seen, the rank value controls both the accuracy and the number of parameters. A proper value (e.g. 16 in this case) can balance both effectiveness and efficiency.}

\begin{figure}[!htbp]
  \centering
  \includegraphics[width=0.95\columnwidth]{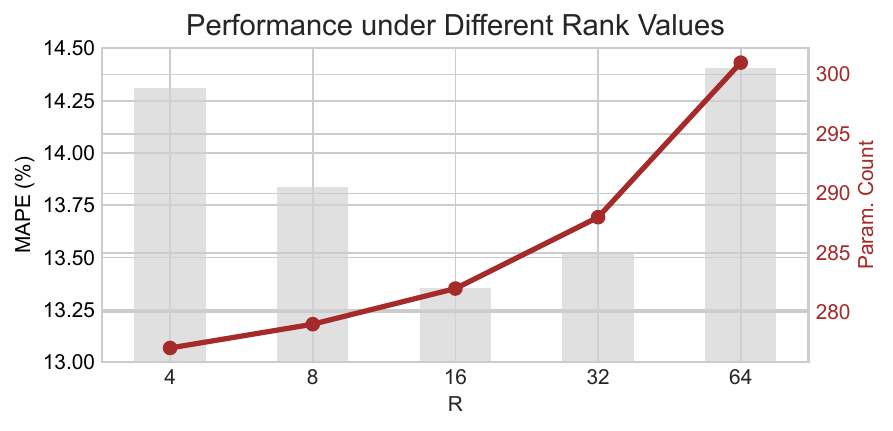}
  \caption{{Performance with different rank values in Eq. \eqref{eq:lare}.}}
  \label{fig:rank}
\end{figure}

{
\subsection{Case Study}
To further enhance interpretability in real-world scenarios, we provide a case study using traffic flow data on the California road network. Recall that the LRAE reflects the coordinate of each node in the embedding space, it can mirror the pattern similarity in the physical space. In Fig. \ref{fig:case-study} (a), we select several sensors with high feature similarities according to the pairwise similarity matrix in (b). We then show the flow profiles of these sensors in (c). It is observed that these sensors encounter the same traffic congestion during this time period, with a clear delay propagation path from sensor \# 945 to \# 905.

\begin{figure}[!htbp]
  \centering
  \includegraphics[width=0.99\columnwidth]{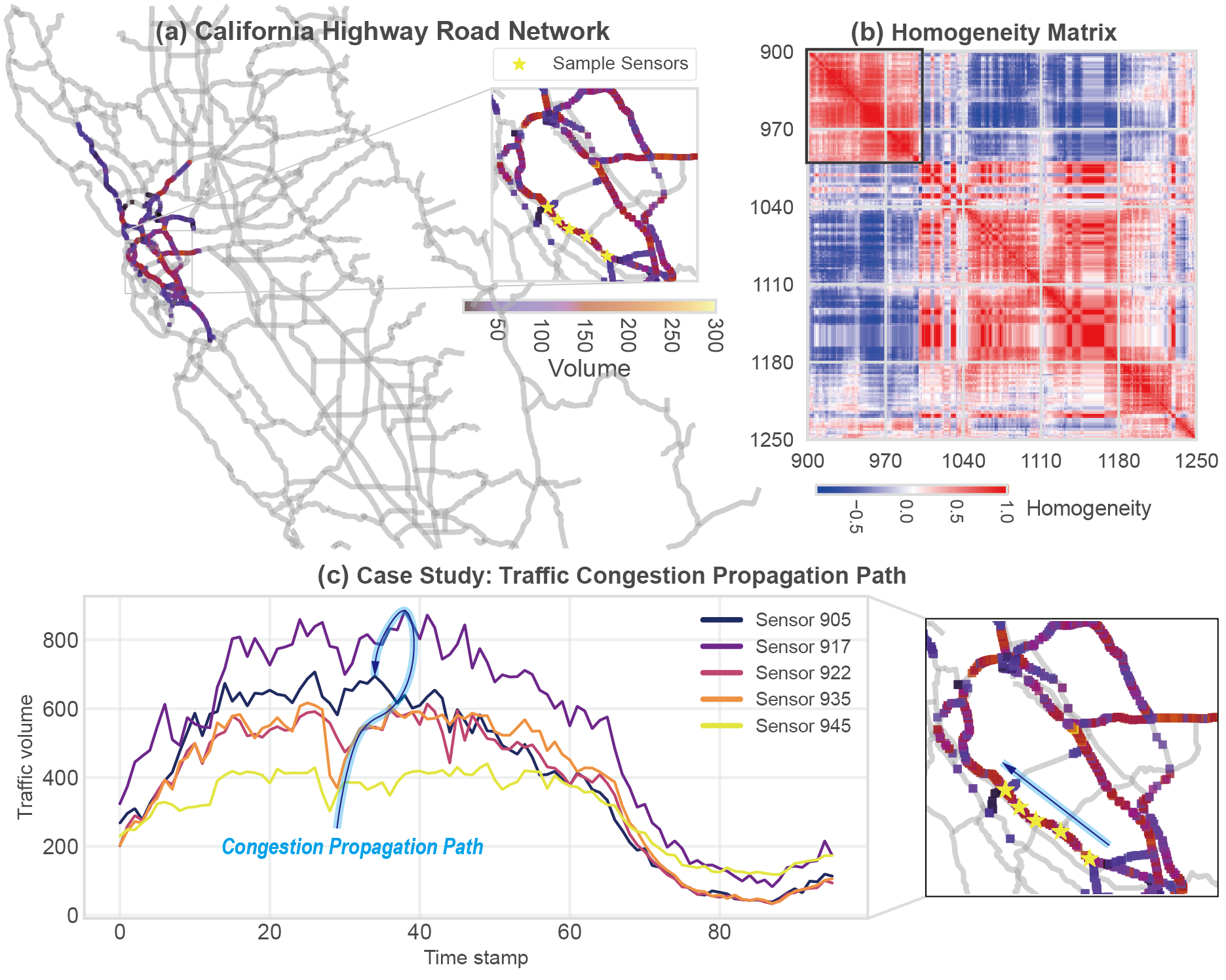}
  \caption{{Case study using GBA traffic flow data.}}
  \label{fig:case-study}
\end{figure}

}

\section{Conclusion}\label{sec:conclusion}
This paper links the neural diffusion process and the graph denoising problem to predict the dynamics of large-scale urban networks. Based on the theoretical analysis, we present a scalable spatiotemporal Transformer model, called ScaleSTF, to balance both effectiveness and efficiency.
With linear complexity, ScaleSTF achieves SOTA performance on large-scale benchmarks with a much reduced computational burden.
It can yield up to more than 10x speed acceleration over SOTA Transformers, significantly reducing parameters and memory usage.
Beyond the current results, we believe that the proposed methodology can facilitate the build of foundational Transformers on large networked urban systems.

{
Future efforts can investigate the computational complexity and scalability of ScaleSTF in real-time systems, especially those operating in high-frequency data streams, such as meteorological monitoring or smart grid environments. Although our current implementation demonstrates promising performance, deploying the model in latency-sensitive settings requires optimizing inference efficiency and ensuring that model updates can be performed incrementally or in an online fashion.
}

\bibliographystyle{IEEEtran}
\bibliography{references}


\begin{IEEEbiography}[{\includegraphics[width=1in,height=1.25in,keepaspectratio]{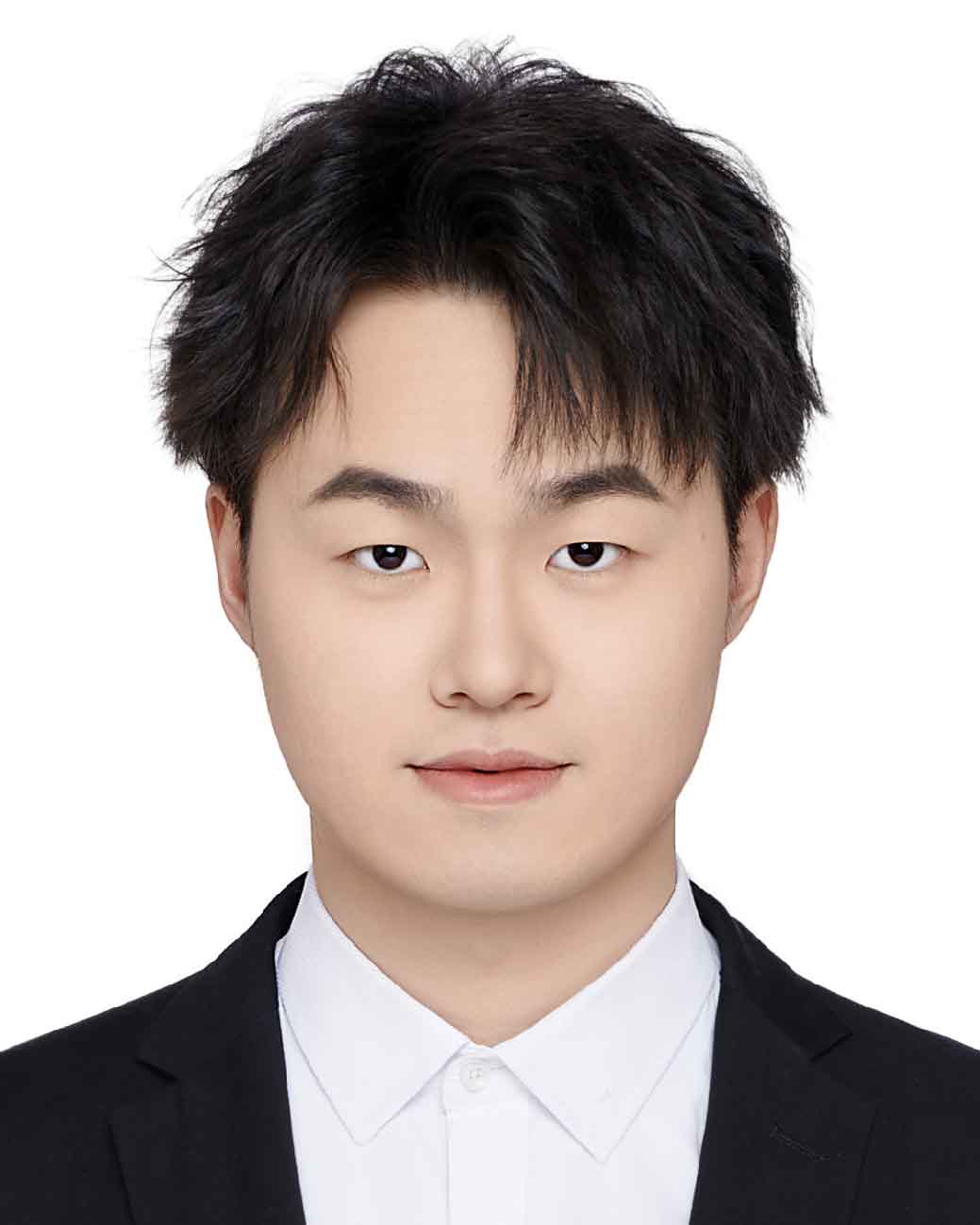}}]{Tong Nie} received the B.S. degree from the college of civil engineering, Tongji University, Shanghai, China. He is currently pursuing dual Ph.D. degrees with Tongji University and The Hong Kong Polytechnic University. He has published several papers in top-tier venues in the field of spatiotemporal data modeling, including KDD, AAAI, CIKM, IEEE TITS, and TR-Part C/E.
His research interests include spatiotemporal learning, time series analysis, and large language models. His research is funded by the National Natural Science Foundation of China.
\end{IEEEbiography}


\begin{IEEEbiography}[{\includegraphics[width=1in,height=1.25in,keepaspectratio]{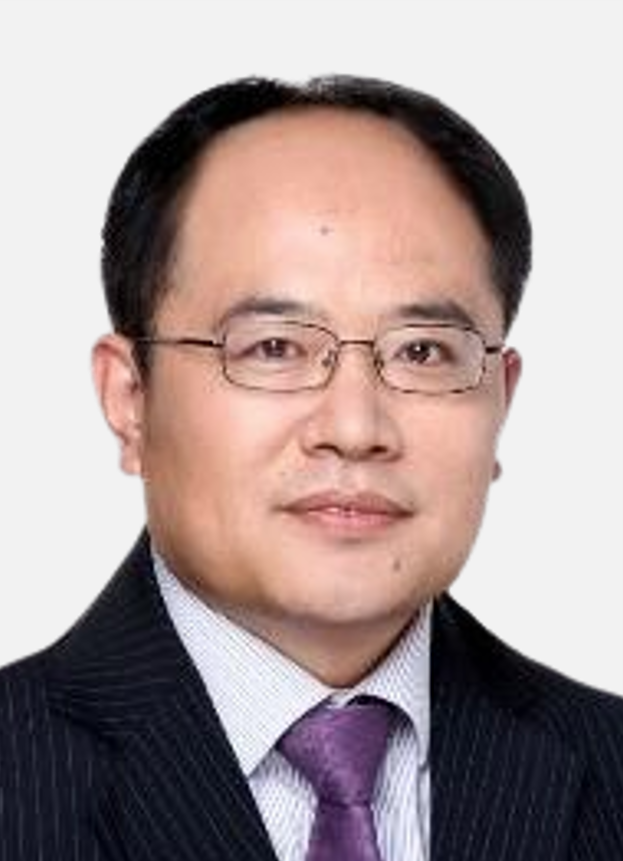}}]{Jian Sun (Senior Member, IEEE)} received the Ph.D. degree in transportation engineering from Tongji University, Shanghai, China. He is currently a Professor of transportation engineering with Tongji University. He has published more than 200 papers in SCI journals.
His research interests include intelligent transportation systems, traffic flow theory, AI in transportation, and traffic simulation. 
\end{IEEEbiography}

\begin{IEEEbiography}
[{\includegraphics[width=1in,height=1.25in,keepaspectratio]{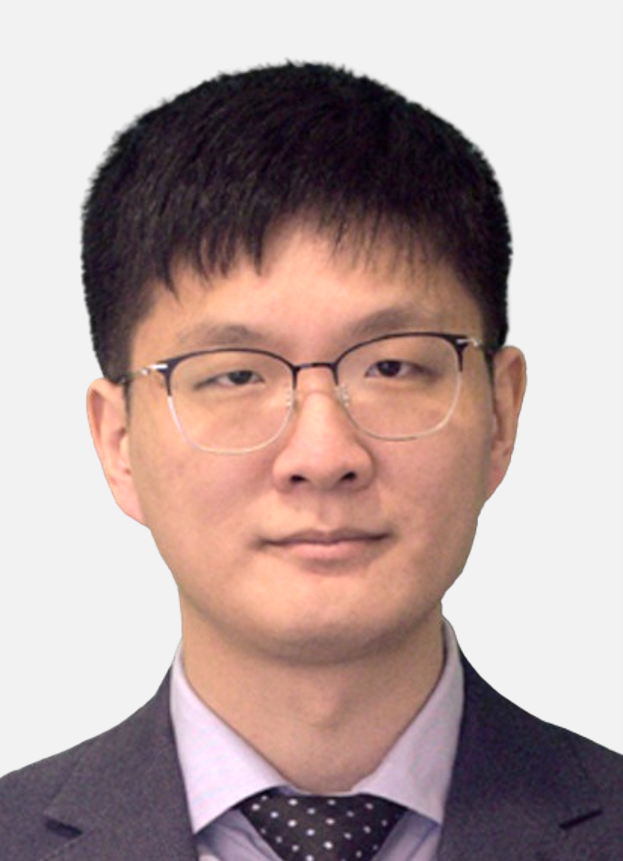}}]{Wei Ma (Member, IEEE)} received the bachelor’s
degree in civil engineering and mathematics from
Tsinghua University, China, and the master’s degree
in machine learning and civil and environmental engineering and the Ph.D. degree in civil and
environmental engineering from Carnegie Mellon
University, USA. He is currently an Assistant Professor with the Department of Civil and Environmental
Engineering, The Hong Kong Polytechnic University (PolyU). His current research interests include
machine learning, data mining, and transportation
network modeling, with applications for smart and sustainable mobility
systems.
\end{IEEEbiography}
\vfill

\end{document}